\documentclass{article}

 \usepackage[main, final]{neurips_2026}

\usepackage[utf8]{inputenc} 
\usepackage[T1]{fontenc}    
\usepackage[colorlinks=true,citecolor=brown,urlcolor=gray,linkcolor=BlueViolet]{hyperref}       
\usepackage{url}            
\usepackage{booktabs}       
\usepackage{amsfonts}       
\usepackage{nicefrac}       
\usepackage{microtype}      
\usepackage[table,dvipsnames]{xcolor}
\usepackage{algorithm}
\usepackage{algpseudocode}
\usepackage{graphicx}
\usepackage{enumitem}
\usepackage{multirow}
\usepackage{makecell}
\usepackage{pifont}
\usepackage{wrapfig}
\usepackage{subcaption}
\usepackage{fontawesome5}
\usepackage{needspace}

\usepackage{amsmath,amsfonts,bm,eqnarray}
\usepackage{cancel}
\usepackage{amsthm}
\usepackage{tikz}
\usetikzlibrary{tikzmark}

\newtheorem{proposition}{Proposition}[section]

\newtheorem{innercustomgeneric}{\customgenericname}
\providecommand{\customgenericname}{}
\newcommand{\newcustomtheorem}[2]{%
  \newenvironment{#1}[1]
  {%
   \renewcommand\customgenericname{#2}%
   \renewcommand\theinnercustomgeneric{##1}%
   \innercustomgeneric
  }
  {\endinnercustomgeneric}
}

\newcustomtheorem{customThm}{Theorem}
\newcustomtheorem{customLemma}{Lemma}
\newcustomtheorem{customCor}{Corollary}
\newcustomtheorem{customProposition}{Proposition}

\def\Propref#1{Proposition~\ref{#1}}

\def\Tabref#1{Table~\ref{#1}}

\def\Figref#1{Fig.~\ref{#1}}

\def\appref#1{Appendix~\ref{#1}}
\def\Secref#1{Sec.~\ref{#1}}

\def\eqref#1{equation~\ref{#1}}
\def\Eqref#1{Eqn.~\ref{#1}}

\def\1{\bm{1}}

\def\vh{{\bm{h}}}

\def\vx{{\bm{x}}}
\def\vy{{\bm{y}}}
\def\vz{{\bm{z}}}

\DeclareMathAlphabet{\mathsfit}{\encodingdefault}{\sfdefault}{m}{sl}
\SetMathAlphabet{\mathsfit}{bold}{\encodingdefault}{\sfdefault}{bx}{n}

\renewcommand{\tilde}{\widetilde}
\renewcommand{\hat}{\widehat}
\renewcommand{\frac}{\tfrac}

\renewcommand{\cite}{\citep}

\usepackage{xspace}

\newcommand{\ours}{SAUCE\xspace}

\definecolor{checkgreen}{RGB}{76,175,80} 
\definecolor{spigreen}{HTML}{3F8E5C} 
\definecolor{oursrow}{gray}{0.93}
\title{Sequential Probabilistic Uncertainty Estimation for Parallel Multi-Agent Reasoning Systems}

\author{%
  Tunyu Zhang$^{1,\dagger}$
  \And
  Zihao Zhao$^{2}$
  \And
  Yusong Zhao$^{2}$
  \And
  Haizhou Shi$^{1}$
  \And
  Zhuohang Li$^{3}$
  \AND
  Haoxian Chen$^{4}$
  \And
  Hao Wang$^{2,\dagger,\ddagger}$
  \And
  Dimitris N. Metaxas$^{1,\dagger, \ddagger}$
}

\begin{document}

\maketitle

\begingroup
\makeatletter
\long\def\@makefntext#1{%
  \setlength{\parindent}{0pt}%
  \noindent#1%
}
\makeatother
\renewcommand{\thefootnote}{}

\footnotetext[0]{%
  $^{1}$Rutgers University.
  $^{2}$University of Illinois Urbana-Champaign.
  $^{3}$Vanderbilt University.
  $^{4}$Columbia University.
  \par
  $^{\dagger}$Correspondence to:
  Tunyu Zhang (\nolinkurl{ty.zhang@rutgers.edu}), 
  Hao Wang (\nolinkurl{hwangml@illinois.edu}), 
  Dimitris N. Metaxas (\nolinkurl{dmn@cs.rutgers.edu}).
  $^{\ddagger}$Equal advising.
}
\endgroup

\begin{abstract}
LLM-based multi-agent systems (MAS) 
have attracted growing attention for improving reasoning through interaction among multiple agents. 
In this work, we focus on \emph{parallel multi-agent reasoning systems}, where several agents solve the same problem over multiple rounds and aggregate their outputs into a final answer. 
Despite their strong reasoning performance, uncertainty estimation for such systems remains underexplored: the reliability of a MAS depends not only on individual generations, but also on how agents interact and evolve across rounds. 
We propose \textbf{\ours} (Sequential Agent Uncertainty through Consensus Evolution), a lightweight, training-free uncertainty estimator that formulates MAS uncertainty as sequential inference over a latent system-level belief. 
\ours aggregates round-level agreement and generation-uncertainty signals through a filtering-style update. 
Across five backbones, five benchmarks, and two MAS protocols, \ours improves misclassification detection, selective prediction, and calibration over a broad set of uncertainty estimation baselines, including standard log-likelihood-based methods and MAS-specific estimators.

\smallskip
\noindent\textbf{\faGithub\ Code:}
\href{https://github.com/Tyrion58/SAUCE}
{\texttt{github.com/Tyrion58/SAUCE}}
\par\noindent\textbf{\raisebox{-0.15em}{\includegraphics[height=1em]{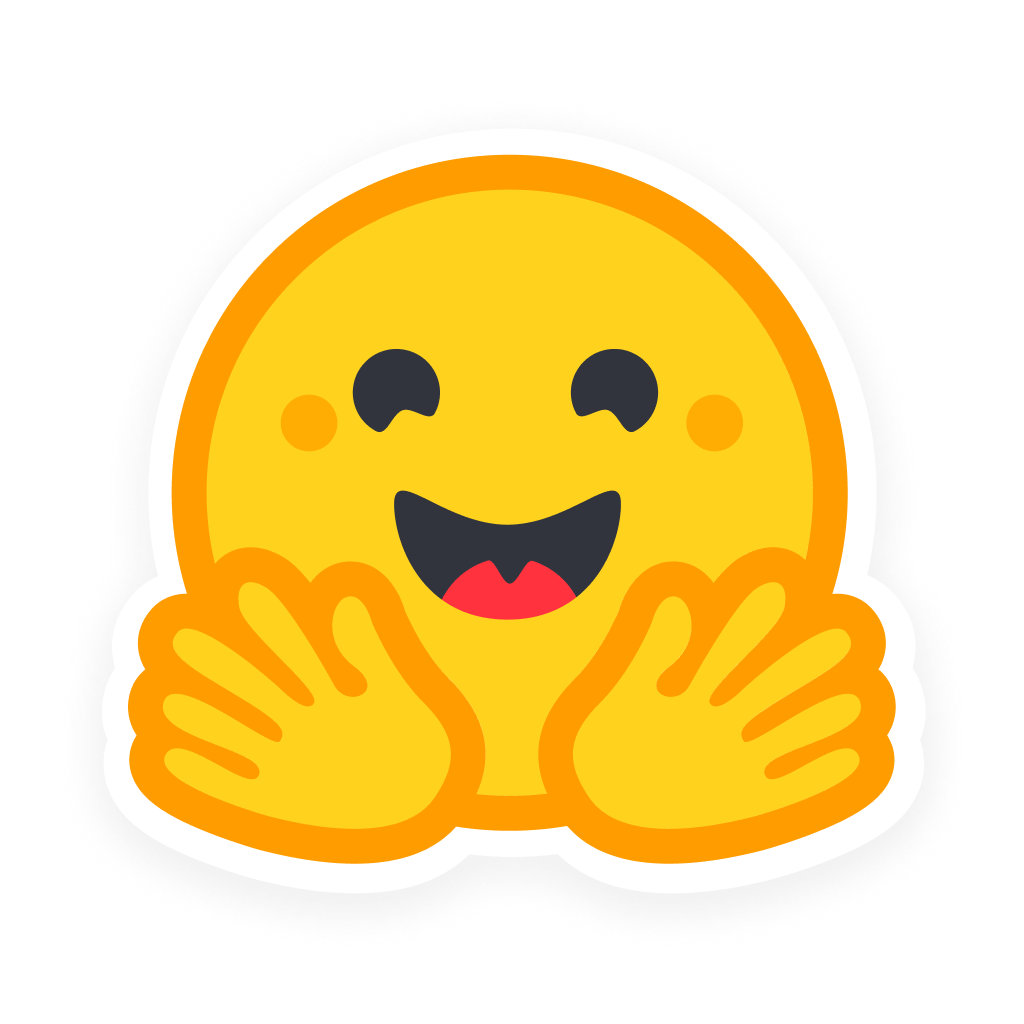}}\ Data:}
\href{https://huggingface.co/datasets/Tyrion279/SAUCE-Qwen3-4B-Rollouts}
{\texttt{Tyrion279/SAUCE-Qwen3-4B-Rollouts}}
\end{abstract}

\vspace{-4mm}
\section{Introduction}

Multi-agent systems (MAS) built with large language models (LLMs) have demonstrated strong capabilities in complex reasoning tasks~\cite{li2024survey, talebirad2023multi, he2025llm, yu2024fincon,MASProVe}. 
A common design is \emph{parallel multi-agent reasoning}~\cite{cui2025free, du2024improving, humulti, liu2023dynamic}, such as debate-style systems~\cite{chan2023chateval, du2024improving}, where multiple agents solve the same problem and exchange responses over several rounds. In these systems, agents act in parallel within each round, while interactions evolve sequentially across rounds. 
As such systems are explored for complex reasoning, planning, and decision-making tasks~\cite{xu2025nuclear, coeckelbergh2025llms, guo2024large, lee2025reliable}, this motivates the need for reliable system-level uncertainty estimation for multi-agent interactions.

Uncertainty estimation has been extensively studied for single LLMs, including sampling-based~\cite{farquhar2024detecting, manakul2023selfcheckgpt, kossen2024semantic, malinin2020uncertainty}, Bayesian~\cite{yangbayesian, wang2024blob, shitraining}, and internal signal-based approaches~\cite{xia2025survey, kossen2024semantic, ren2023self, fu2025deep}. 
Extending these methods to MAS, however, is non-trivial. 
Sampling-based and Bayesian approaches become costly when repeated across agents and rounds. 
Signal-based methods are more computationally practical because they rely on quantities already available during interaction, but they introduce a different challenge: these signals are heterogeneous across agents and fluctuate
over rounds. 
For parallel reasoning MAS, uncertainty estimation must aggregate heterogeneous agent-level signals into a coherent system-level estimate while tracking how these signals change across rounds.

Crucially, multi-agent interaction is a \emph{sequential} process~\cite{liang2024encouraging, du2024improving}. 
As agents exchange information over rounds, reliability may change, so uncertainty should be updated progressively rather than inferred from a single-step signal or a static summary (\Figref{fig:main}~(A)). 
Thus, MAS uncertainty is not merely a property of the final output, but of the \emph{interaction process} itself. 
This raises a key question:
\begin{center}
\emph{Do the interaction dynamics of a MAS inform its internal uncertainty?}
\end{center}

We address this question by modeling MAS uncertainty as a \textbf{sequential inference problem} over interaction rounds.
We propose \textbf{\ours} (\textbf{S}equential \textbf{A}gent \textbf{U}ncertainty through \textbf{C}onsensus \textbf{E}volution), a lightweight, training-free uncertainty estimator for parallel reasoning MAS.
As shown in~\Figref{fig:main}~(B), 
inspired by hierarchical Bayesian deep learning~\cite{BDL,BDLSurvey}, 
\ours treats each round as noisy evidence about an evolving system-level belief, rather than as an independent snapshot.
Round-level agreement provides the evidence, while generation uncertainty determines how strongly the evidence should update the belief.
This yields a filtering-style update that propagates the previous belief state and revises uncertainty according to the current round's generation uncertainty.
As a result, uncertainty is inferred from the full interaction trajectory rather than from a single final answer.
\ours uses only signals extracted from the MAS inference trace, requiring no additional sampling or access to model weights or hidden states.
Empirically, \ours outperforms log-likelihood baselines and MAS-specific uncertainty estimators across Debate~\cite{du2024improving} and DyLAN~\cite{liu2023dynamic} protocols.

\textbf{Contributions.}\quad 
Our contributions are summarized as follows:
\begin{enumerate}[nosep,leftmargin=12pt]
    \item \textbf{A system-dynamics perspective for MAS uncertainty estimation.}
    We argue that uncertainty in parallel reasoning MAS should be estimated from
    how consensus and reliability evolve across rounds, rather than from a static
    final-output signal.

    \item \textbf{A sequential inference framework for consensus aggregation.}
    We formulate MAS uncertainty estimation as a filtering-style update over a
    latent system-level belief, where round-level evidence is weighted by its
    estimated generation uncertainty.

    \item \textbf{A lightweight estimator for practical MAS.}
    We instantiate \ours with majority-vote consensus and token-level generation
    entropy, yielding a training-free method that requires minimal model access
    and applies to both open- and closed-source models with top-$k$ log-probabilities.
\end{enumerate}

\begin{figure}
    \centering
    \includegraphics[width=\linewidth]{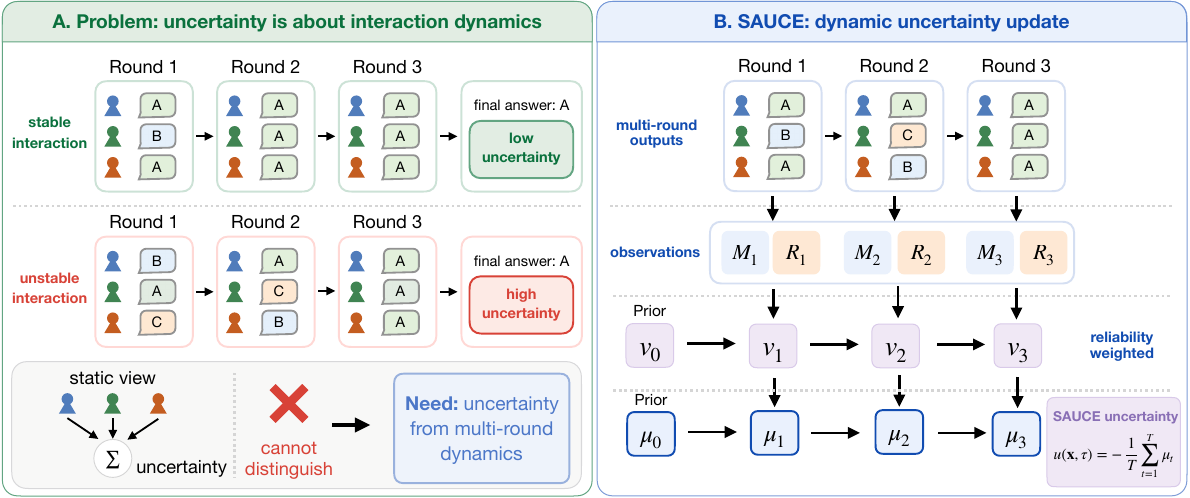}
    \caption{Overview. \textbf{(A)} Two MAS trajectories can reach the same final answer but differ greatly in reliability: a \emph{stable} run whose outputs converge across rounds (top) is more trustworthy than an \emph{unstable} run whose outputs oscillate (bottom). A static view that only considers the final output cannot distinguish them, motivating uncertainty estimation from interaction dynamics. \textbf{(B)} \ours models multi-round interaction as a sequence of round-level observations $(M_t, R_t)$, where agreement is weighted by generation uncertainty, and performs a filtering-style update over a latent system-level belief. The resulting trajectory summary defines the \ours uncertainty score $u(\vx,\tau)$ (\Eqref{eq:final-u}).}
    \label{fig:main}
\vspace{-3mm}
\end{figure}

\section{Preliminaries}
In this section, we formalize the multi-agent reasoning process and define the problem of uncertainty estimation for MAS.

\subsection{Multi-Agent Reasoning Trajectory}

Given an input problem $\vx$, a MAS consists of $N$ agents $\mathcal{A}=\{A_1,\ldots,A_N\}$ that solve the problem through $T$ rounds of interaction.
At round $t$, agent $A_i$ produces an output
\begin{align}
\vz_t^i \sim \pi_i(\cdot \mid \vx, \vh_t^i),
\end{align}
where $\vh_t^i$ denotes the information available to agent $A_i$ before round $t$, including its own previous outputs and the messages observed from other agents.
The full multi-agent interaction trajectory is
\begin{align}
\tau = \left(\vz_1^{1:N}, \vz_2^{1:N}, \ldots, \vz_T^{1:N}\right),
\end{align}
which is sampled from the MAS rollout distribution induced by the agents and the interaction protocol.
The final prediction is obtained by applying an aggregation function $g$ to the input and the trajectory:
\begin{align}
\hat{\vy} = g(\vx,\tau).
\end{align}
\textbf{Scope: Parallel Reasoning MAS.}\quad
This paper focuses on \emph{parallel} multi-agent reasoning~\cite{du2024improving, liang2024encouraging, liu2023dynamic}, where all $N$ agents generate responses in parallel at each round, conditioned on the same input $\vx$ and the prior interaction history.
In the protocols considered in this paper, the final prediction is typically obtained by aggregating the round-$T$ parsed answers, e.g., by majority vote~\cite{wang2022self, kaesberg2025voting, choidebate}.
Debate~\cite{du2024improving} and DyLAN~\cite{liu2023dynamic} are representative protocols of this form.

\subsection{Sequential Uncertainty Estimation for MAS}

Given an input $\vx$, a reasoning trajectory $\tau$, and the final prediction $\hat{\vy}=g(\vx,\tau)$, our goal is to estimate a trajectory-level uncertainty score
\begin{align}
u(\vx,\tau) \in \mathbb{R},
\end{align}
where a larger value indicates lower confidence in $\hat{\vy}$.

\textbf{Dynamics in MAS.}\quad
Unlike static confidence measures that summarize only the final output or aggregate signals without temporal structure, we model MAS uncertainty as a quantity induced by the interaction process.
At each round, agent outputs induce round-level evidence about the system state, and the uncertainty is updated sequentially as new evidence arrives:
\begin{align}
u_t = \Phi(u_{t-1}, e_t), 
\qquad 
u(\vx,\tau)=\mathcal{A}(u_{1:T}),
\end{align}
where $e_t$ denotes the evidence extracted from round $t$, $u_t$ denotes the system uncertainty state after round $t$, $\Phi$ is a sequential update rule, and $\mathcal{A}$ maps the uncertainty trajectory to the final score.

\textbf{Challenges.}\quad
This dynamic view is essential because the final prediction arises from multi-round communication, where agents may reinforce, revise, or contradict one another.
A static summary, such as the log-likelihood of the final answer or an order-invariant average of per-round signals, is blind to the \emph{temporal order} of evidence: two trajectories with identical average signals but opposite dynamics may receive the same score despite carrying very different reliability.
MAS uncertainty should therefore be updated progressively as new evidence emerges across rounds, rather than inferred from a single-step signal or a static aggregation.

Motivated by this view, we formulate MAS uncertainty estimation as a sequential inference problem.
\ours instantiates this principle by integrating round-level signals into the uncertainty state $u_t$, leading to the method in~\Secref{sec:method} and the probabilistic interpretation in~\Secref{sec:theory}.

\section{Method}
\label{sec:method}
We present \ours, a lightweight uncertainty estimator for parallel reasoning MAS.
\ours extracts round-level evidence from generations of agents, and updates uncertainty across rounds through a filtering-style recursion.
This recursion can be interpreted as \textbf{posterior uncertainty inference over a latent system-level belief}, with the full probabilistic view given in~\Secref{sec:theory}.
\subsection{Round-Level Signals from Agent Outputs}
\label{sec:reliability}

At each interaction round $t$, the $N$ agents generate outputs in parallel,
$\{\vz_t^i\}_{i=1}^N$.
This parallel structure naturally exposes two round-level signals at no extra cost:
a cross-agent \emph{agreement} signal and a per-agent \emph{generation-uncertainty} signal.

\textbf{Agreement Signal $M_t$.}\quad
For each output $\vz_t^i$, we extract the parsed answer $a_t^i$.
Let $m_t$ denote the modal answer at round $t$.
We define $M_t$ as the \emph{modal-answer share}, i.e., the fraction of agents whose answer matches $m_t$:
\begin{align}\label{eq:obs}
M_t \;=\; \frac{1}{N}\sum\nolimits_{i=1}^{N} \mathbb{1}[\,a_t^i = m_t\,].
\end{align}
A higher $M_t$ reflects stronger cross-agent consensus. 

\textbf{Generation-Uncertainty Signal $R_t$.}\quad
Cross-agent agreement should be trusted more when the underlying generations are confident.
We measure round-level generation uncertainty by averaging token-level predictive entropy across agents:
\begin{align}\label{eq:R_round}
R_t \;=\; \frac{1}{N}\sum\nolimits_{i=1}^{N} H_{\mathrm{tok}}(\vz_t^i),
\end{align}
where $H_{\mathrm{tok}}(\vz_t^i)$ averages predictive entropy~\cite{malinin2020uncertainty} over the tokens of $\vz_t^i$, computed from the top-$k$ log-probabilities.
Larger $R_t$ indicates higher generation uncertainty, so the round-level agreement should be trusted less.
Together, $(M_t, R_t)$ captures two sources of MAS uncertainty: cross-agent disagreement, reflected by low $M_t$, and generation uncertainty, reflected by high $R_t$.


\subsection{Sequential Uncertainty Update}\label{sec:sequential}

\ours maintains a round-level confidence estimate $\mu_t$ throughout the multi-agent interaction, together with a stability weight $v_t$ that controls how strongly new evidence updates this estimate.
Both quantities are initialized by hyperparameters, $\mu_0$ and $v_0$.
The final uncertainty score is obtained by negating the temporal average of $\mu_t$.
This sequential estimation process alternates between two steps: confidence \emph{propagation} and reliability-weighted confidence \emph{update}.

\begin{wrapfigure}{r}{0.40\linewidth}
    \centering
    \includegraphics[width=\linewidth]{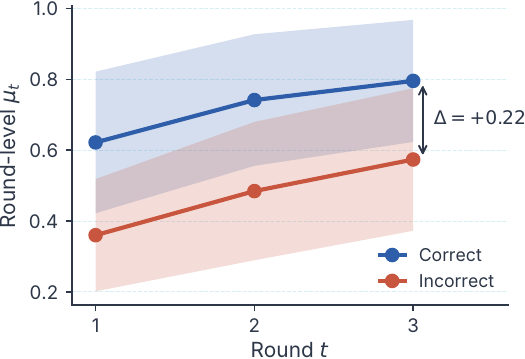}
    \caption{Round-level $\mu_t$ on Llama-3.1-8B / MATH-500. Both classes grow more confident with each round, but correct trajectories converge to a consistently higher consensus. Shading shows $\pm 1$ standard deviation across trajectories.}
    \label{fig:mu-per-round}
    \vspace{1.0em}
\end{wrapfigure}

\textbf{Propagation.}\quad
At round $t$, \ours first propagates the previous state:
\begin{align}\label{eq:trans-mv}
\tilde{\mu}_t = \lambda \mu_{t-1}, \qquad
\tilde{v}_t = \lambda^2 v_{t-1} + q, \qquad t\in [1,T],
\end{align}
where $\lambda \in (0,1)$ controls temporal coupling across rounds, and $q$ captures additional uncertainty introduced during interaction. The two limits motivate this open interval: as $\lambda \to 0$ the state resets each round (no temporal memory), and as $\lambda \to 1$ the previous belief is retained without decay; an interior $\lambda$ balances these two extremes.

\ours then combines the propagated state with the current round signals through a reliability-dependent weight:
\begin{align}\label{eq:alpha}
\alpha_t \;=\; \tilde{v}_t/(\tilde{v}_t + R_t), \quad t\in [1,T].
\end{align}

\textbf{Update.}\quad
The weight $\alpha_t$ determines how much \ours trusts the current round. When the round is reliable, $R_t$ is small and $\alpha_t$ is large, so the estimate moves quickly toward the current consensus signal $M_t$. When the round is noisy, $\alpha_t$ becomes small, and \ours retains more of the propagated state. The update is
\begin{align}\label{eq:update-mv}
\mu_t \;=\; \tilde{\mu}_t + \alpha_t(M_t - \tilde{\mu}_t),
\quad
v_t \;=\; (1-\alpha_t)\,\tilde{v}_t.
\end{align}
This recursion has the same form as \textbf{the posterior mean and variance update} of a linear-Gaussian filter with effective noise $R_t$, as formally derived in~\Secref{sec:theory}.



We show the evolution of round-level confidence $\mu_t$ in~\Figref{fig:mu-per-round}. As the debate proceeds, $\mu_t$ gradually increases, indicating that the system becomes more confident over rounds. Moreover, correct samples consistently exhibit higher average $\mu_t$ than incorrect samples, suggesting that the sequential confidence estimate remains informative throughout the interaction.

\begin{figure}[t]
    \centering
    \begin{subfigure}[t]{0.48\linewidth}
        \centering
        \includegraphics[width=\linewidth]{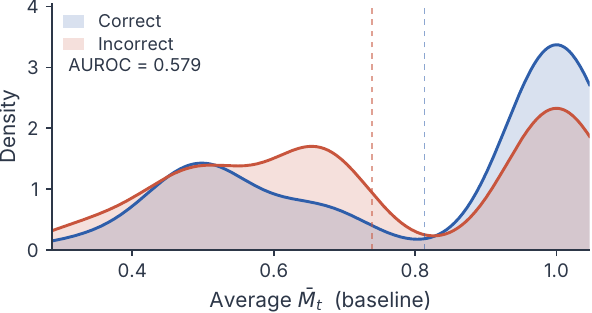}
        \caption{Baseline $\bar M_t$.}
        \label{fig:dist_mt}
    \end{subfigure}
    \hfill
    \begin{subfigure}[t]{0.48\linewidth}
        \centering
        \includegraphics[width=\linewidth]{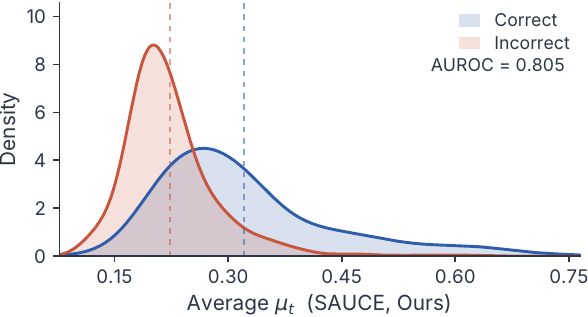}
        \caption{\ours $\mu_t$.}
        \label{fig:dist_spi}
    \end{subfigure}
    \caption{Distribution of (a) the baseline agreement signal $\bar{M}_t$ and (b) \ours's posterior-mean estimate $\mu_t$ (each averaged over rounds) for correct vs.\ incorrect MAS outputs (MMLU-Pro, Gemini-2.5-FL, DyLAN). \ours sharpens the separation between classes (AUROC $0.579 \!\to\! 0.805$). Densities are normalized independently in each panel.}
    \label{fig:mt_vs_spin}
\end{figure}

Finally, \ours aggregates the consensus estimate across the trajectory into a per-record uncertainty score:
\begin{align}\label{eq:final-u}
u(\vx,\tau) \;=\; -\frac{1}{T}\sum\nolimits_{t=1}^{T} \mu_t.
\end{align}
We use the temporal average rather than the terminal value $\mu_T$ so that the score reflects the full deliberation process. 

\textbf{Remark.}\quad
As shown in~\Figref{fig:mt_vs_spin}, directly averaging $M_t$ yields mixed and highly overlapping distributions for correct and incorrect trajectories, whereas \ours produces a much cleaner separation.
This highlights the importance of sequentially modeling how confidence evolves across rounds.

\section{Theoretical Analysis}
\label{sec:theory}
We provide a probabilistic interpretation of \ours by viewing multi-agent interaction as a sequential inference process over a latent system belief. 
This perspective is inspired by the hierarchical Bayesian deep learning framework~\cite{BDL,BDLSurvey}, where observable neural representations are connected to latent probabilistic variables through a hierarchical probabilistic graphical model~\cite{VI}. 
As illustrated in~\Figref{fig:prob-spin}, we treat agent outputs at each round as noisy observations of a latent belief and estimate uncertainty through posterior inference.

\begin{wrapfigure}{r}{0.5\linewidth}
    \centering
    \vspace{-1.0em}
    \includegraphics[width=\linewidth]{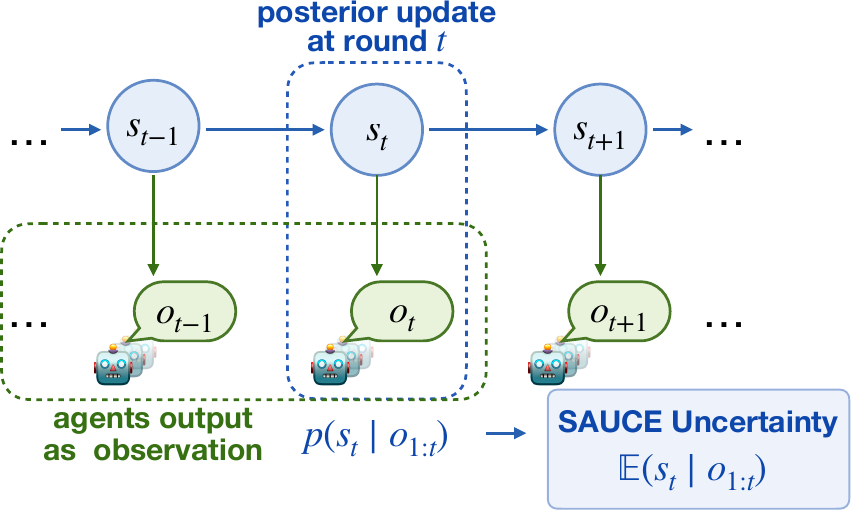}
    \vspace{-1.6em}
    \caption{Probabilistic view of \ours. A latent system belief $s_t$ evolves across rounds and emits a noisy observation $o_t$ formed from the agents' outputs. \ours's per-round estimate is the posterior expectation $\mathbb{E}[s_t\mid o_{1:t}]\!=\!\mu_t$.}
    \label{fig:prob-spin}
\end{wrapfigure}

\subsection{Sequential Inference over Latent System Beliefs}
Let $s_t \in \mathbb{R}$ denote the latent belief of the multi-agent system at round $t$, representing the system's underlying belief about the correctness of its current solution. 

\textbf{Transition Model.}\quad
We assume that the latent belief evolves over rounds according to a first-order Markov process~\cite{rabiner1989tutorial,murphy2012machine, sarkka2023bayesian},
\begin{align}
p(s_t \mid s_{t-1}),
\end{align}
which captures the temporal evolution of the system belief during interaction.

\textbf{Observation Model.}\quad
At round $t$, we extract a round-level signal $o_t$ from the collection of agent outputs and treat it as a noisy observation of the latent belief state $s_t$. Formally, the observation process is described by
\begin{align}
p(o_t \mid s_t),
\end{align}
which specifies the conditional distribution of the observed signal given the latent belief.

\textbf{Sequential Inference.}\quad
Given the transition and observation models, posterior inference over the latent belief follows the Bayesian filtering~\cite{sarkka2023bayesian, doucet2001introduction, kalman1960new} recursion
\begin{align}
p(s_t \mid o_{1:t})
\propto
p(o_t \mid s_t)\int p(s_t \mid s_{t-1})\, p(s_{t-1} \mid o_{1:t-1})\, ds_{t-1}.
\end{align}

\subsection{Linear-Gaussian Instantiation and Posterior Variance Update}
To obtain a tractable update rule, we adopt a \emph{linear-Gaussian} instantiation of the framework above. Specifically, we assume the latent belief evolves as
\begin{align}
\label{eq:latent-evolve}
s_t = \lambda\, s_{t-1} + \epsilon_t,
\qquad \epsilon_t \sim \mathcal{N}(0, q),
\end{align}
where $\lambda \in (0,1)$ controls the temporal dependence across rounds and $q$ is the process noise that captures additional uncertainty introduced during interaction. The latent belief therefore evolves smoothly while being continually shaped by the accumulated interaction history. Treating $\lambda$ as a constant is a deliberate simplification, which we justify empirically through ablations in~\appref{app:constant-lambda}.

\begin{proposition}
\label{prop:update-gaussian}
Under the linear-Gaussian instantiation above, let the observation be $o_t \mid s_t \sim \mathcal{N}(s_t, r_t^2)$. If the posterior belief remains Gaussian, $s_t \mid o_{1:t} \sim \mathcal{N}(\mu_t, \sigma_t^2)$, then the posterior mean and variance satisfy
\begin{align}
\mu_t      &= \tilde{\mu}_t + \alpha_t \bigl(o_t - \tilde{\mu}_t\bigr), \\
\sigma_t^2 &= \left(\frac{1}{\tilde{\sigma}_t^2} + \frac{1}{r_t^2}\right)^{-1}
            = (1-\alpha_t)\,\tilde{\sigma}_t^2,
\end{align}
where the predictive moments and Kalman gain~\cite{kalman1960new} are
$\tilde{\mu}_t = \lambda\, \mu_{t-1}$,
$\tilde{\sigma}_t^2 = \lambda^2 \sigma_{t-1}^2 + q$, and
$\alpha_t = \tilde{\sigma}_t^2 / (\tilde{\sigma}_t^2 + r_t^2)$.
See~\appref{app:proof-prop-update} for the proof.
\end{proposition}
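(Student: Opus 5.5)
The proof is the standard two-step Kalman filter derivation, run by induction on $t$ inside the Bayesian filtering recursion of \Secref{sec:theory}. The inductive hypothesis is $s_{t-1}\mid o_{1:t-1}\sim\mathcal{N}(\mu_{t-1},\sigma_{t-1}^2)$; the base case is supplied by the Gaussian prior with moments $(\mu_0,\sigma_0^2)$, matching the initialization $(\mu_0,v_0)$ of \Secref{sec:sequential}. The argument then has a prediction step and a correction step.

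\emph{Prediction.} By \Eqref{eq:latent-evolve}, $s_t=\lambda s_{t-1}+\epsilon_t$, where $\epsilon_t\sim\mathcal{N}(0,q)$ is independent of $s_{t-1}$ and of $o_{1:t-1}$. Independence of $\epsilon_t$ from the past observations is implicit in the first-order Markov state-space model. Since an affine map of a Gaussian plus independent Gaussian noise is Gaussian, the integral $\int p(s_t\mid s_{t-1})\,p(s_{t-1}\mid o_{1:t-1})\,ds_{t-1}$ equals $\mathcal{N}(\tilde\mu_t,\tilde\sigma_t^2)$. Its mean is $\tilde\mu_t=\lambda\mu_{t-1}$, by linearity of expectation. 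Its variance is $\tilde\sigma_t^2=\lambda^2\sigma_{t-1}^2+q$, because the variances of independent terms add.

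\emph{Correction.} Bayes' rule gives $p(s_t\mid o_{1:t})\propto \mathcal{N}(o_t; s_t, r_t^2)\,\mathcal{N}(s_t;\tilde\mu_t,\tilde\sigma_t^2)$. This uses the conditional independence of $o_t$ from $o_{1:t-1}$ given $s_t$. Taking logarithms, the exponent is
\begin{align*}
-\tfrac12\Bigl[\bigl(\tfrac{1}{\tilde\sigma_t^2}+\tfrac{1}{r_t^2}\bigr)s_t^2-2\bigl(\tfrac{\tilde\mu_t}{\tilde\sigma_t^2}+\tfrac{o_t}{r_t^2}\bigr)s_t\Bigr]+\text{const},
\end{align*}
which is quadratic in $s_t$. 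This confirms that the posterior is Gaussian. Completing the square, the posterior precision is $1/\sigma_t^2=1/\tilde\sigma_t^2+1/r_t^2$, and the posterior mean is $\mu_t=\sigma_t^2\bigl(\tilde\mu_t/\tilde\sigma_t^2+o_t/r_t^2\bigr)$.

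\emph{Rewriting in gain form.} This is the only step needing care. Combine the fractions to get $\sigma_t^2=\tilde\sigma_t^2 r_t^2/(\tilde\sigma_t^2+r_t^2)$. Since $r_t^2/(\tilde\sigma_t^2+r_t^2)=1-\alpha_t$, this equals $(1-\alpha_t)\tilde\sigma_t^2$. Substituting into the mean gives
\begin{align*}
\mu_t=\frac{r_t^2}{\tilde\sigma_t^2+r_t^2}\,\tilde\mu_t+\frac{\tilde\sigma_t^2}{\tilde\sigma_t^2+r_t^2}\,o_t=(1-\alpha_t)\tilde\mu_t+\alpha_t o_t,
\end{align*}
which is the stated update $\mu_t=\tilde\mu_t+\alpha_t(o_t-\tilde\mu_t)$. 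The result is again Gaussian, which closes the induction.

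Finally, identifying $o_t=M_t$, $\sigma_t^2=v_t$ and $r_t^2=R_t$ recovers \Eqref{eq:trans-mv}, \Eqref{eq:alpha} and \Eqref{eq:update-mv}. No step is a genuine obstacle. The main points needing care are stating the independence assumptions explicitly and doing the algebraic passage from precision form to gain form.
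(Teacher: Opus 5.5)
Your proof is correct. Its skeleton matches the paper's: induction on $t$, with the same prediction step (affine map of a Gaussian plus independent Gaussian noise, giving $\tilde\mu_t=\lambda\mu_{t-1}$ and $\tilde\sigma_t^2=\lambda^2\sigma_{t-1}^2+q$). The two proofs differ in the correction step.

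\emph{Your route.} You multiply the likelihood by the predictive density and complete the square. This yields the precision form $1/\sigma_t^2=1/\tilde\sigma_t^2+1/r_t^2$ and the precision-weighted mean first. You then need algebra to reach the gain form.

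\emph{The paper's route.} It writes down the joint predictive law of $(s_t,o_t)$ given $o_{1:t-1}$: both means equal $\tilde\mu_t$, the variances are $\tilde\sigma_t^2$ and $\tilde\sigma_t^2+r_t^2$, and the cross-covariance is $\tilde\sigma_t^2$. It then applies the standard Gaussian conditioning identity. This gives $\mu_t=\tilde\mu_t+\alpha_t(o_t-\tilde\mu_t)$ and $\sigma_t^2=(1-\alpha_t)\tilde\sigma_t^2$ directly, and the precision form is derived afterwards.

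\emph{What each buys.} Your argument is more elementary and self-contained: it needs only Bayes' rule and completing the square, with no conditioning lemma, and precision addition appears as the primary result. The paper's argument has three advantages:
\begin{itemize}
\item It never inverts $\tilde\sigma_t^2$ or $r_t^2$ individually, so the gain form holds whenever $\tilde\sigma_t^2+r_t^2>0$, including a degenerate $r_t^2=0$.
\item It carries over verbatim to the matrix Kalman filter.
\item It exposes the predictive law $o_t\mid o_{1:t-1}\sim\mathcal{N}(\tilde\mu_t,\tilde\sigma_t^2+r_t^2)$, which is exactly the quantity reused for the innovations likelihood in \appref{app:proof-innovations}.
\end{itemize}
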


\Propref{prop:update-gaussian} shows that the \ours recursion (\Eqref{eq:trans-mv}--\Eqref{eq:update-mv}) is the exact posterior of this linear-Gaussian model under the identification $o_t \!\leftrightarrow\! M_t$ and $r_t^2 \!\leftrightarrow\! R_t$. Substituting these into the proposition recovers the \ours update verbatim, with the agreement signal $M_t$ acting as a noisy observation of the latent belief and the generation-uncertainty signal $R_t$ as its observation-noise variance. The running estimate $\mu_t$ is therefore the posterior mean $\mathbb{E}[s_t \mid o_{1:t}]$, and the stability weight $v_t$ its posterior variance $\mathrm{Var}[s_t \mid o_{1:t}]$.

\textbf{Takeaway.}\quad \ours carries two quantities with a clean probabilistic reading. The round-level estimate $\mu_t$ \emph{is} the posterior mean of the latent system belief $s_t$, so the per-record score $u(\vx,\tau) = -\frac{1}{T}\sum_t \mu_t$ summarizes how the system's belief evolves across the entire trajectory rather than relying on the terminal round alone. 

\section{Experiments}
\label{sec:experiments}
We evaluate \ours on two standard uncertainty-estimation tasks for large language models:
\emph{(i)} \textbf{misclassification detection} and \emph{(ii)} \textbf{selective prediction}.
We further analyze \textbf{calibration} via the Brier score after Platt scaling, and include two diagnostic studies on long-debate robustness and the contribution of the sequential update.
\subsection{Experimental Setup}
\label{sec:setup}

\textbf{Models.}\quad
We evaluate five backbones spanning both open- and closed-source models. 
The open-source models include \textbf{Qwen3-4B}~\cite{yang2025qwen3}, \textbf{Llama-3.1-8B-Instruct}~\cite{grattafiori2024llama}, and \textbf{Gemma-4-E4B-it}~\cite{google2026gemma4e4bit}; the closed-source models include \textbf{Gemini-2.5-Flash-Lite}~\cite{comanici2025gemini} and \textbf{GPT-5.4-mini}~\cite{openai2026gpt54mini}.
Open-source models are served locally with vLLM~\cite{kwon2023efficient}, which provides per-token log-probabilities~\cite{murray2018correcting, fu2025deep}; for closed-source models, we use the top-$k$ log-probabilities returned by the API.

\textbf{MAS Architectures.}\quad
We run two representative MAS protocols on every backbone. \emph{(i)} \textbf{Debate}~\cite{du2024improving} uses $N\!=\!3$ agents that first solve each problem independently and then exchange their full responses for $T\!=\!3$ rounds. \emph{(ii)} \textbf{DyLAN}~\cite{liu2023dynamic} places the same agents in a dynamic agent network with a listwise ranker that retains the top $k\!=\!2$ replies each round and an early-stopping consensus threshold of $0.667$. For the main experiments, both protocols use up to $T\!=\!3$ rounds; the consensus answer is the majority vote of the surviving agents in the final round. We probe longer debates separately in~\Secref{sec:scale-rounds}.

\textbf{Benchmarks.}\quad
We evaluate on five widely used reasoning suites covering mathematical problem solving, broad knowledge, and hard instruction-following.
For \emph{open-source} models we use \textbf{MATH-500}~\cite{lightman2024let}, \textbf{MMLU-Pro}~\cite{wang2024mmlu}, and \textbf{BBH}~\cite{suzgun2022challenging}.
For \emph{closed-source} models we use \textbf{OlympiadBench}~\cite{he2024olympiadbench}, \textbf{MMLU-Pro} (subset), and \textbf{BBEH-mini}~\cite{kazemi2025big}. Benchmark sizes, formats, and answer verifiers are summarized in~\appref{app:benchmarks}.

\textbf{Baselines.}\quad
We compare \ours with four training-free uncertainty estimators:
\textbf{PE}, the predictive entropy of the final-round output~\cite{malinin2020uncertainty};
\textbf{LL}, the log-likelihood of the final answer~\cite{murray2018correcting};
\textbf{UDPO}~\cite{tang2026value}, a system-level variance score for debate; and
\textbf{MATU}~\cite{chen2026every}, an uncertainty estimator designed for multi-agent systems. The original MATU samples multiple MAS runs at substantially higher inference cost, so for budget parity we use its single-run variant ($N\!=\!1$).

\textbf{Metrics.}\quad 
Following standard practice in uncertainty estimation~\cite{farquhar2024detecting, chen2026every, TokUR}, we measure ranking quality with \textbf{AUROC}~$\uparrow$, which captures how well an estimator separates correct from incorrect MAS outputs, and \textbf{AUARC}~$\uparrow$~\cite{hanley1982meaning, boyd2013area}, which summarizes the accuracy--coverage curve obtained by rejecting increasingly uncertain inputs. For calibration, we report the \textbf{Brier score after Platt scaling}~\cite{glenn1950verification, platt1999probabilistic, guo2017calibration} with 5-fold cross-validation (lower is better). All three metrics are reported as percentages ($\times 100$). Formal definitions are given in~\appref{app:metrics}.

\textbf{Implementation Details.}\quad
All experiments use 3 agents and 3 interaction rounds, decoding with temperature $0.7$, top-$p\!=\!0.9$, and an 8192-token budget per agent. \ours uses transferred hyperparameters: for each MAS architecture, we tune on a single calibration cell and reuse the same values across every other model--dataset combination. The transferred values are listed in~\appref{app:hp-values}, and per-axis sensitivity is reported in~\appref{app:hyperparams}.
\begin{table*}[t]
    \caption{Misclassification detection (AUROC ~$\uparrow$) and selective prediction (AUARC~$\uparrow$) on open-source models.
    \textbf{Boldface} and \underline{underlining} denote the best and second-best performance.}
    \vspace{-1em}
    \label{tab:main-open-spi-ue}
    \begin{center}
    \resizebox{1\linewidth}{!}{%
    \setlength{\tabcolsep}{3.5pt}
    \begin{tabular}{l cccc @{\hspace{10pt}} cccc @{\hspace{10pt}} cccc @{\hspace{10pt}} cc}
    \toprule
      \multirow{3}{*}{\textbf{\makecell{Method}}}
      & \multicolumn{4}{c}{\textbf{MATH-500}}
      & \multicolumn{4}{c}{\textbf{MMLU-Pro}}
      & \multicolumn{4}{c}{\textbf{BBH}}
      & \multicolumn{2}{c}{\multirow{2}{*}{\textbf{Avg}}} \\
      \cmidrule(lr){2-5} \cmidrule(lr){6-9} \cmidrule(lr){10-13}
      & \multicolumn{2}{c}{Debate}
      & \multicolumn{2}{c}{DyLAN}
      & \multicolumn{2}{c}{Debate}
      & \multicolumn{2}{c}{DyLAN}
      & \multicolumn{2}{c}{Debate}
      & \multicolumn{2}{c}{DyLAN}
      & & \\
      \cmidrule(lr){2-3} \cmidrule(lr){4-5} \cmidrule(lr){6-7} \cmidrule(lr){8-9} \cmidrule(lr){10-11}
  \cmidrule(lr){12-13} \cmidrule(lr){14-15}
      & {ROC} & {ARC}
      & {ROC} & {ARC}
      & {ROC} & {ARC}
      & {ROC} & {ARC}
      & {ROC} & {ARC}
      & {ROC} & {ARC}
      & {ROC} & {ARC} \\

    \midrule
    \multicolumn{15}{c}{\textbf{Qwen3-4B}} \\
    \midrule
    PE
        & 64.89
        & \underline{83.45}
        & 64.58
        & 82.70
        & 59.54
        & 73.63
        & 63.53
        & 74.92
        & 68.90
        & 87.13
        & 59.13
        & 83.79
        & 63.43
        & 80.94 \\
    LL
        & \underline{65.93}
        & 83.14
        & 66.36
        & 84.08
        & \underline{68.39}
        & \underline{78.49}
        & \underline{69.88}
        & \underline{79.27}
        & \underline{69.56}
        & \underline{87.62}
        & 60.61
        & 83.82
        & \underline{66.79}
        & \underline{82.74} \\
    UDPO
        & 57.29
        & 79.14
        & \underline{72.50}
        & \underline{84.10}
        & 58.22
        & 66.43
        & 67.95
        & 73.43
        & 59.18
        & 84.38
        & \underline{64.97}
        & \textbf{88.00}
        & 63.35
        & 79.25 \\
    MATU (N=1)
        & 62.11
        & 82.90
        & 63.30
        & 84.05
        & 58.55
        & 73.11
        & 57.81
        & 73.46
        & 58.59
        & 84.56
        & 63.62
        & 85.76
        & 60.66
        & 80.64 \\
    \rowcolor{oursrow}
    \ours (Ours)
        & \textbf{68.81}
        & \textbf{84.12}
        & \textbf{73.11}
        & \textbf{85.60}
        & \textbf{77.05}
        & \textbf{82.72}
        & \textbf{78.14}
        & \textbf{83.37}
        & \textbf{73.68}
        & \textbf{89.09}
        & \textbf{70.18}
        & \underline{87.93}
        & \textbf{73.50}
        & \textbf{85.47} \\
    \midrule
    \multicolumn{15}{c}{\textbf{Llama-3.1-8B-Instruct}} \\
    \midrule
    PE
        & 35.69
        & 34.10
        & 46.96
        & 43.83
        & 45.90
        & 46.51
        & 58.68
        & 57.33
        & 53.92
        & 58.99
        & 63.45
        & 71.06
        & 50.77
        & 51.97 \\
    LL
        & 39.20
        & 35.68
        & 47.62
        & 46.44
        & 48.69
        & 47.12
        & 57.69
        & 56.99
        & 57.78
        & 61.44
        & 63.66
        & 71.53
        & 52.44
        & 53.20 \\
    UDPO
        & \underline{70.94}
        & \underline{63.13}
        & 70.21
        & 66.81
        & \underline{66.96}
        & \underline{61.75}
        & \underline{68.71}
        & \underline{66.58}
        & \underline{66.71}
        & \underline{71.44}
        & 62.82
        & 72.22
        & \underline{67.72}
        & \underline{66.99} \\
    MATU (N=1)
        & 55.55
        & 49.77
        & \underline{73.85}
        & \underline{70.71}
        & 52.85
        & 53.23
        & 63.26
        & 65.32
        & 61.31
        & 64.20
        & \underline{66.24}
        & \underline{72.51}
        & 62.18
        & 62.62 \\
    \rowcolor{oursrow}
    \ours (Ours)
        & \textbf{83.63}
        & \textbf{72.12}
        & \textbf{80.69}
        & \textbf{74.46}
        & \textbf{75.83}
        & \textbf{71.00}
        & \textbf{74.44}
        & \textbf{72.30}
        & \textbf{74.07}
        & \textbf{74.81}
        & \textbf{72.35}
        & \textbf{76.94}
        & \textbf{76.84}
        & \textbf{73.61} \\
    \midrule
    \multicolumn{15}{c}{\textbf{Gemma-4-E4B-it}} \\
    \midrule
    PE
        & 58.20
        & 87.47
        & \textbf{66.08}
        & \underline{88.45}
        & 69.33
        & 83.64
        & 75.50
        & 87.58
        & 53.67
        & 84.69
        & 54.47
        & 79.97
        & 62.88
        & 85.30 \\
    LL
        & \textbf{62.43}
        & \underline{87.55}
        & \underline{65.12}
        & \textbf{88.58}
        & \underline{74.04}
        & \underline{86.63}
        & \underline{76.19}
        & \underline{87.89}
        & \underline{66.28}
        & 89.59
        & 56.53
        & 80.70
        & \underline{66.77}
        & \underline{86.82} \\
    UDPO
        & 55.83
        & 85.47
        & 55.53
        & 84.45
        & 62.01
        & 75.64
        & 72.28
        & 81.36
        & 64.01
        & \underline{93.83}
        & \underline{80.36}
        & \textbf{92.30}
        & 65.00
        & 85.51 \\
    MATU (N=1)
        & 54.01
        & 86.99
        & 60.64
        & 87.72
        & 55.37
        & 77.41
        & 65.76
        & 82.36
        & 57.73
        & 90.82
        & 74.98
        & 88.82
        & 61.41
        & 85.69 \\
    \rowcolor{oursrow}
    \ours (Ours)
        & \underline{62.29}
        & \textbf{87.95}
        & 64.05
        & 87.89
        & \textbf{82.24}
        & \textbf{89.75}
        & \textbf{82.57}
        & \textbf{90.08}
        & \textbf{83.62}
        & \textbf{96.33}
        & \textbf{83.53}
        & \underline{91.94}
        & \textbf{76.38}
        & \textbf{90.66} \\
    \bottomrule
    \end{tabular}
    }
    \end{center}
\end{table*}
\begin{table*}[t]
    \caption{Misclassification detection and selective prediction on closed-source models. Metrics and notation follow Table~\ref{tab:main-open-spi-ue}. \textbf{Boldface} and \underline{underlining} denote the best and second-best performance.}
    \vspace{-1em}
    \label{tab:main-closed-spi-ue}
    \begin{center}
    \resizebox{1\linewidth}{!}{%
    \setlength{\tabcolsep}{5pt}
    \begin{tabular}{l cccc @{\hspace{10pt}} cccc @{\hspace{10pt}} cccc @{\hspace{10pt}} cc}
    \toprule
    \multirow{3}{*}{\textbf{\makecell{Method}}}
    & \multicolumn{4}{c}{\textbf{Olympiad}}
    & \multicolumn{4}{c}{\textbf{MMLU-Pro}}
    & \multicolumn{4}{c}{\textbf{BBEH-mini}}
    & \multicolumn{2}{c}{\multirow{2}{*}{\textbf{Avg}}} \\
    \cmidrule(lr){2-5} \cmidrule(lr){6-9} \cmidrule(lr){10-13} 
    & \multicolumn{2}{c}{Debate}
    & \multicolumn{2}{c}{DyLAN}
    & \multicolumn{2}{c}{Debate}
    & \multicolumn{2}{c}{DyLAN}
    & \multicolumn{2}{c}{Debate}
    & \multicolumn{2}{c}{DyLAN}
    & & \\
    \cmidrule(lr){2-3} \cmidrule(lr){4-5} \cmidrule(lr){6-7} \cmidrule(lr){8-9} \cmidrule(lr){10-11} \cmidrule(lr){12-13} \cmidrule(lr){14-15}
    & {ROC} & {ARC}
    & {ROC} & {ARC}
    & {ROC} & {ARC}
    & {ROC} & {ARC}
    & {ROC} & {ARC}
    & {ROC} & {ARC}
    & {ROC} & {ARC} \\

    \midrule
    \multicolumn{15}{c}{\textbf{Gemini-2.5-Flash-Lite}} \\
    \midrule
    PE
        & 65.51
        & 66.86
        & 66.35
        & 65.23
        & 48.71
        & 73.35
        & 70.22
        & 89.09
        & 49.98
        & 19.13
        & 41.73
        & 16.06
        & 57.08
        & 54.95 \\
    LL
        & 68.67
        & 68.13
        & 67.42
        & 66.67
        & \underline{64.90}
        & \underline{82.82}
        & \underline{72.18}
        & \underline{90.05}
        & 46.71
        & 18.50
        & 39.41
        & 15.46
        & 59.88
        & 56.94 \\
    UDPO
        & 63.08
        & 48.89
        & \underline{81.58}
        & \underline{71.05}
        & 54.59
        & 77.52
        & 57.83
        & 82.71
        & \underline{59.23}
        & \underline{25.12}
        & \textbf{75.41}
        & \textbf{27.29}
        & \underline{65.29}
        & 55.43 \\
    MATU (N=1)
        & \underline{68.96}
        & \underline{70.73}
        & 64.31
        & 70.23
        & 54.37
        & 78.22
        & 61.06
        & 85.46
        & 56.50
        & 23.83
        & 66.08
        & \underline{25.37}
        & 61.88
        & \underline{58.97} \\
    \rowcolor{oursrow}
    \ours (Ours)
        & \textbf{81.27}
        & \textbf{76.82}
        & \textbf{81.85}
        & \textbf{77.13}
        & \textbf{79.60}
        & \textbf{89.84}
        & \textbf{78.72}
        & \textbf{91.95}
        & \textbf{65.68}
        & \textbf{28.07}
        & \underline{67.48}
        & 24.44
        & \textbf{75.77}
        & \textbf{64.71} \\
    \midrule
    \multicolumn{15}{c}{\textbf{GPT-5.4-mini}} \\
    \midrule
    PE
        & 55.56
        & 54.90
        & 72.58
        & 63.59
        & 54.61
        & 80.26
        & 63.26
        & 84.77
        & 46.36
        & 16.34
        & 39.92
        & 13.00
        & 55.38
        & 52.14 \\
    LL
        & \underline{70.25}
        & \underline{61.87}
        & 75.03
        & \underline{65.69}
        & \underline{61.57}
        & \underline{83.56}
        & 65.51
        & 85.97
        & 41.67
        & 14.18
        & 45.03
        & 13.76
        & 59.84
        & \underline{54.17} \\
    UDPO
        & 65.99
        & 46.61
        & \underline{77.73}
        & 59.55
        & 56.29
        & 81.71
        & \underline{67.70}
        & \underline{86.20}
        & \textbf{58.58}
        & \textbf{19.59}
        & \underline{54.73}
        & \underline{16.08}
        & \underline{63.50}
        & 51.62 \\
    MATU (N=1)
        & 53.45
        & 54.91
        & 65.23
        & 63.74
        & 56.44
        & 81.52
        & 60.31
        & 85.04
        & 48.64
        & 16.96
        & \textbf{55.78}
        & \textbf{18.93}
        & 56.64
        & 53.51 \\
    \rowcolor{oursrow}
    \ours (Ours)
        & \textbf{85.15}
        & \textbf{74.46}
        & \textbf{83.46}
        & \textbf{72.57}
        & \textbf{70.63}
        & \textbf{86.44}
        & \textbf{73.83}
        & \textbf{88.29}
        & \underline{50.08}
        & \underline{17.72}
        & 47.36
        & 14.85
        & \textbf{68.42}
        & \textbf{59.06} \\
    \bottomrule
    \end{tabular}
    }
    \end{center}
        \vspace{-5mm}
\end{table*}
\subsection{Uncertainty Estimation for Multi-Agent Systems}
\label{sec:main-results}
Tables~\ref{tab:main-open-spi-ue} and~\ref{tab:main-closed-spi-ue} report AUROC and AUARC on the full grid of backbones, MAS protocols, and benchmarks of~\Secref{sec:setup}. \ours leads the large majority of cells and every per-backbone average; the gap to the strongest baseline holds under both ranking (AUROC) and accuracy--coverage (AUARC) views and on both Debate and DyLAN. Among the baselines, single-LLM scores (PE, LL) frequently land near or below chance, and UDPO and MATU each lead only on isolated (model, protocol) cells.

\textbf{Results.}\quad Single-round token statistics break down under MAS: interaction reshapes the very entropy and log-likelihood that PE and LL rely on (\Secref{sec:scale-rounds}). Specialized MAS estimators tie their gains to a single protocol and do not transfer across backbones or datasets. Only \ours's filtering-style aggregation across rounds wins consistently across backbones, datasets, and protocols, and the same property lets it carry over to the closed-source setting where only top-$k$ token log-probabilities are exposed.
\vspace{-2mm}
\subsection{Uncertainty Estimation for Calibration}
\begin{figure}[t]
    \centering
    \includegraphics[width=\linewidth]{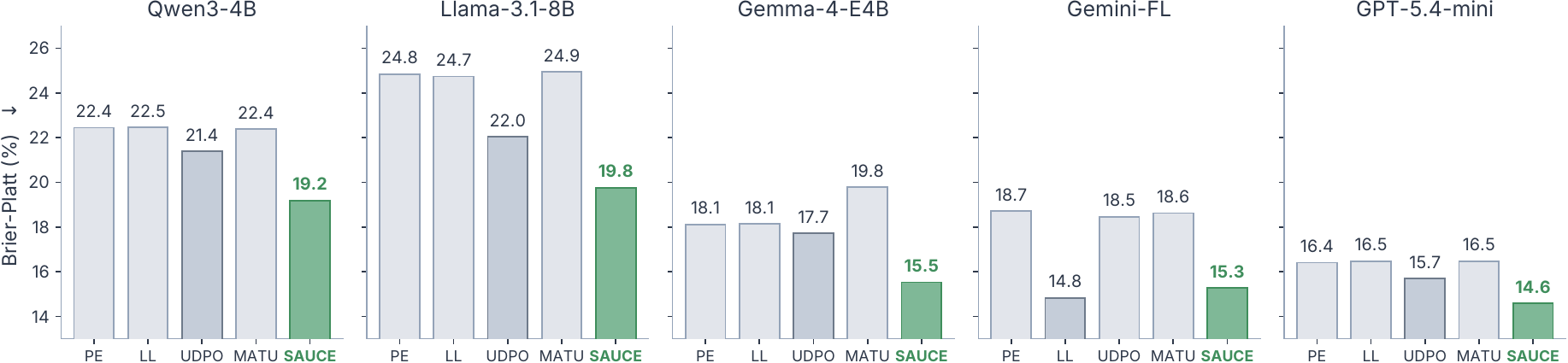}
    \caption{Brier score after Platt scaling (5-fold cross-validation) on MMLU-Pro under the Debate protocol, across all five backbones. \ours attains the lowest Brier-Platt score on four of the five backbones.}
    \label{fig:calibration}
\end{figure}

\begin{figure}[t]
    \centering
    \includegraphics[width=\linewidth]{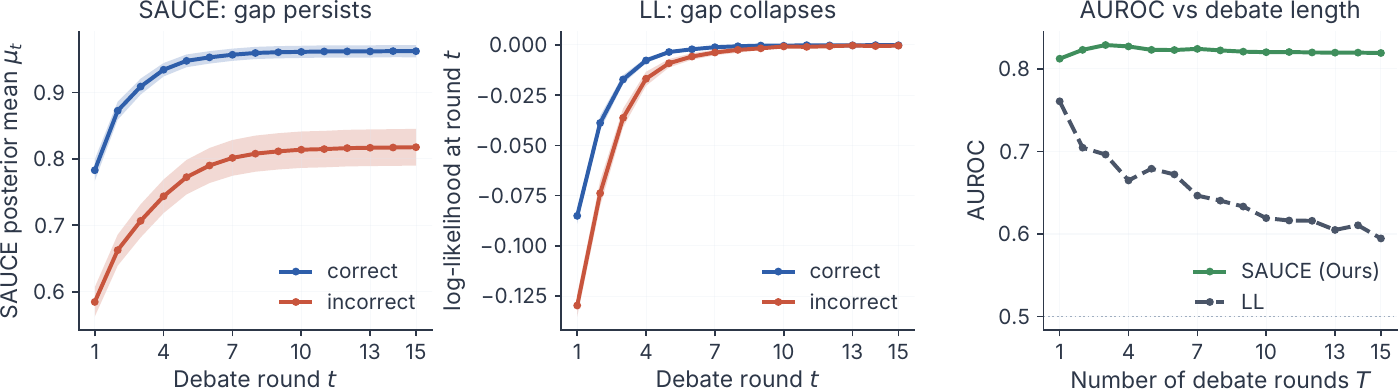}
    \caption{Per-round behavior on a 15-round Qwen3-4B Debate run on MMLU-Pro, grouped by final-answer correctness. 
    \emph{\textbf{Left}:} \ours posterior mean $\mu_t$; the correct/incorrect gap persists through round 15. 
    \emph{\textbf{Middle}:} Average per-agent log-likelihood at round $t$; both classes become increasingly token-confident and the gap vanishes. 
    \emph{\textbf{Right}:} AUROC against final-consensus correctness as a function of debate length $T$; \ours plateaus while LL degrades toward chance (dashed line).}
    \label{fig:scale-rounds}
    \vspace{-5mm}
\end{figure}

\Figref{fig:calibration} reports the Brier score after Platt scaling on MMLU-Pro under the Debate protocol across all five backbones. \ours attains the lowest Brier-Platt score on four of the five backbones; on Gemini-FL it is within $0.5$\,pp of the strongest baseline (LL). Calibration results on additional benchmarks and settings are deferred to~\appref{app:calib}.

\textbf{Results.}\quad Brier penalizes both miscalibrated magnitudes and unresolved predictions, so this view complements AUROC/AUARC: a method can rank well yet pay a Brier penalty when its confidences cluster in a narrow band---trivially calibrated, but unable to separate reliable from unreliable trajectories. \ours's calibrated probabilities, by contrast, remain well spread across the probability range while still landing close to empirical accuracy. The score therefore has the right \emph{shape} as well as the right \emph{order}, and a two-parameter Platt transform suffices to turn it into a per-trajectory confidence usable as a downstream gating signal.


\vspace{-1mm}
\subsection{Robustness in Long Debates}
\label{sec:scale-rounds}


Our main experiments use $T\!=\!3$. To test whether \ours remains reliable in longer interactions, we run Qwen3-4B Debate on MMLU-Pro up to $T\!=\!15$ and compare \ours with a last-round log-likelihood (LL) baseline at each truncation. As the debate proceeds, agents become increasingly token-confident on both correct and incorrect trajectories, causing the LL scores to concentrate and the correct/incorrect gap to vanish (\Figref{fig:scale-rounds}, \emph{middle}). In contrast, \ours's posterior mean $\mu_t$ also rises as agreement accumulates, but preserves a clear class separation throughout the sweep (\emph{left}); correspondingly, \ours's AUROC plateaus after $T\!=\!3$, whereas LL steadily degrades toward chance (\emph{right}). These results suggest that retaining earlier-round evidence makes \ours more robust than last-round confidence scores when debate length increases.

\Needspace{6\baselineskip}
\subsection{Contribution of the Sequential Update}
\label{sec:ablation-signal}
\begin{wraptable}{r}{0.45\linewidth}
\vspace{-1.2em}
\centering
\caption{Component-wise ablation on Qwen3-4B / MMLU-Pro / Debate. Row~(b) is the static joint of $M_t$ and $R_t$; row~(c) is the full sequential update with $\bar\mu$ readout. AUROC~$\uparrow$ and Brier-Platt~$\downarrow$.}
\label{tab:ablation-signal}
\resizebox{\linewidth}{!}{
\setlength{\tabcolsep}{6pt}
\begin{tabular}{lcc}
\toprule
Method & AUROC & Brier-Platt \\
\midrule
Last round, $1{-}M_t$ & 58.4 & 21.3 \\
(a) \quad Round-mean of $M_t$ & 68.6 & 20.0 \\
(b) \quad Static joint with $R_t$  & 68.4 & 20.9 \\
\rowcolor{oursrow}
(c) \quad \textbf{\ours} (Ours) & \textbf{77.1} & \textbf{19.2} \\
\bottomrule
\end{tabular}
}
\vspace{-10mm}
\end{wraptable}

A natural concern is that \ours's improvement may come simply from combining agreement $M_t$ with generation uncertainty $R_t$, rather than from the sequential update itself. 
Row~(b) of Table~\ref{tab:ablation-signal} tests this possibility with a static precision-weighted aggregation $\frac{\sum_t M_t/R_t}{\sum_t 1/R_t}$.

\textbf{Results.}\quad 
Table~\ref{tab:ablation-signal} shows that the static joint score performs closely to the $M_t$-only round mean, while the full \ours update yields a clear gain. 
This indicates that \ours's improvement comes from the sequential update rather than a static combination of $M_t$ and $R_t$.
To further probe the contribution of interaction history, we compare simpler trajectory summaries and alternative round orders (\appref{app:trajectory-summaries}; statistical analyses in \appref{app:statistical-significance}) and examine whether answer-switch histories remain informative when final agreement is held fixed (\appref{app:path-dynamics}).

\section{Related Work}
\textbf{Uncertainty Estimation for LLMs.}\quad
Uncertainty estimation~\cite{NPN,VIR} has been widely studied for hallucination detection and calibration in LLMs~\cite{farquhar2024detecting, liu2024uncertainty, bakman2025reconsidering, liu2025uncertainty, vashurin2025benchmarking}. 
Existing methods include sampling-based approaches, such as Semantic Entropy~\cite{farquhar2024detecting} and SAR~\cite{duan2024shifting}, which estimate uncertainty from multiple generations and their semantic variation; Bayesian approaches, which approximate posterior uncertainty through weight perturbations or low-rank adaptation~\cite{TokUR, shitraining, wang2024blob, yangbayesian}; and internal-signal methods, which use token entropy, log-likelihood, or related generation statistics with little additional cost~\cite{fu2025deep,van2022mutual,ren2022out,darrin2023rainproof, duan2024shifting}. 
While effective for single LLMs, these methods do not directly address the structure of multi-agent reasoning, where uncertainty depends not only on individual generations but also on how agents interact and evolve across rounds. 
Our work addresses this gap by formulating uncertainty estimation in parallel multi-agent reasoning systems as a sequential inference problem over the interaction trajectory.

\textbf{Parallel Multi-Agent Reasoning Systems.}\quad
LLM-based multi-agent systems (MAS) have emerged as a promising paradigm for complex reasoning, where multiple agents collaborate, debate, or critique one another~\cite{talebirad2023multi, li2024survey, shi2025ssr, zhang2024cut}. 
We focus on \emph{parallel multi-agent reasoning systems}, such as debate-style methods~\cite{choidebate, du2024improving, cui2025free} and DyLAN~\cite{liu2023dynamic}, where agents solve the same input at each round, exchange information, and produce a verifiable final prediction. 
This setting naturally exposes system-level signals~\cite{ohuncertainty, tang2026value}, such as cross-agent agreement, disagreement, and their evolution over rounds. 
Recent work has begun to study uncertainty in MAS, including general frameworks for uncertainty quantification in LLM agents~\cite{ohuncertainty}, uncertainty-driven optimization for mitigating debate collapse~\cite{tang2026value}, and MATU~\cite{chen2026every}, which estimates trajectory-level uncertainty from repeated MAS runs. 
In contrast, \ours is a lightweight, training-free estimator that updates uncertainty online from standard rollout signals within a single parallel multi-agent reasoning trajectory.
\vspace{-2mm}
\section{Conclusion}
We studied uncertainty estimation in parallel multi-agent reasoning systems. We formulated MAS uncertainty as a sequential inference problem over the multi-round interaction trajectory, and proposed \ours: a filtering-style update that propagates a latent consensus state across rounds. Across Debate and DyLAN protocols on multiple reasoning benchmarks, \ours consistently outperforms log-likelihood baselines and recent MAS-specific uncertainty estimators on both ranking and calibration, and remains stable as debates extend over more rounds where last-round confidence scores collapse. More broadly, our results suggest that interaction dynamics are a useful primitive for reasoning about the reliability of multi-agent systems, opening the way toward uncertainty-aware orchestration and adaptive control of MAS.

\textbf{Limitations.}\quad 
\ours focuses on \emph{parallel multi-agent reasoning systems}, where agents produce comparable per-round answers. This assumption limits its direct applicability to more general agentic systems with heterogeneous roles, tool use, or open-ended planning, where round-level consensus is not naturally defined. Our current instantiation also uses simple agreement and uncertainty signals, namely modal-answer share and token-level predictive entropy; developing principled alternatives remains an important direction for future work.

\section*{Acknowledgement}
\label{sec:acknowledgement}
We thank all reviewers, AC, and SAC for their valuable comments. HW is supported by Amazon Faculty Research Award, Microsoft AI \& Society Fellowship, Nokia Faculty Research Award, NSF CAREER Award IIS-2340125, NIH grant R01CA297832, and NSF grant IIS-2127918. 

\noindent\textbf{Competing Interests.} The authors declare no competing interests.

\bibliographystyle{abbrv}
\bibliography{ref}

\newpage
\appendix
\section*{\LARGE Appendix}
\markboth{Appendix}{Appendix}

In \appref{app:algorithm}, we present the full algorithmic description of \ours as a single-pass procedure over a multi-agent reasoning trajectory.
In \appref{app:impl}, we provide our implementation details, including:
\begin{itemize}[nosep]
    \item \textbf{evaluation metrics}~(\appref{app:metrics}),
    \item \textbf{benchmark formats and subset construction}~(\appref{app:benchmarks}),
    \item and \textbf{hyperparameter configuration}~(\appref{app:hp-values}).
\end{itemize}
In \appref{app:empirical}, we present additional empirical studies, including:
\begin{itemize}[nosep]
    \item a \textbf{comparison with a learned, context-dependent transition coefficient}~(\appref{app:constant-lambda}),
    \item \textbf{per-axis hyperparameter sensitivity}~(\appref{app:hyperparams}),
    \item the \textbf{full Brier-Platt calibration grid}~(\appref{app:calib}),
    \item \textbf{trajectory-summary comparisons and temporal-order ablations}~(\appref{app:trajectory-summaries}),
    \item \textbf{bootstrap comparisons and repeated-rollout variability}~(\appref{app:statistical-significance}),
    \item and \textbf{trajectory analysis under unanimous final agreement}~(\appref{app:path-dynamics}).
\end{itemize}
Finally, in \appref{app:theory}, we provide proofs and derivations supporting our theoretical claims, including:
\begin{itemize}[nosep]
    \item the \textbf{proof of Proposition~\ref{prop:update-gaussian}}~(\appref{app:proof-prop-update}),
    \item the \textbf{innovations form of the marginal likelihood}~(\appref{app:proof-innovations}),
    \item and the \textbf{equivalence with the evidence lower bound}~(\appref{app:elbo}).
\end{itemize}

\section{Algorithm}
\label{app:algorithm}

Algorithm~\ref{alg:spi} summarizes \ours as a single-pass procedure over a parallel-MAS rollout. Given $\tau = (\vz_t^{1:N})_{t=1}^T$, \ours traverses the rounds once, extracts the round-level signals $(M_t, R_t)$ defined in~\Secref{sec:reliability}, and applies the propagation--update recursion of~\Secref{sec:sequential}. The procedure is training-free, requires no additional forward passes, and operates entirely on quantities available in a standard rollout (parsed answers and top-$k$ token log-probabilities). All hyperparameters are scalars and are shared across (model, dataset) cells within each MAS protocol (\appref{app:hyperparams}).

\begin{algorithm}[h]
\caption{\ours: Sequential Agent Uncertainty through Consensus Evolution.}
\label{alg:spi}
\begin{algorithmic}[1]
\Require Rollout $\tau = (\vz_t^{1:N})_{t=1}^T$ for input $\vx$; hyperparameters $\lambda \in (0,1)$, $q > 0$, $\mu_0$, $v_0 > 0$.
\Ensure Per-record uncertainty score $u(\vx, \tau)$.
\State $(\mu, v) \gets (\mu_0, v_0)$
\For{$t = 1, \ldots, T$}
    \State Extract parsed answers $\{a_t^i\}_{i=1}^N$ and per-token log-probabilities from $\{\vz_t^i\}_{i=1}^N$
    \State $m_t \gets \mathrm{mode}(\{a_t^i\}_{i=1}^N)$ \Comment{round modal answer}
    \State $M_t \gets \frac{1}{N}\sum_{i=1}^{N} \mathbb{1}[a_t^i = m_t]$ \Comment{agreement signal (\Eqref{eq:obs})}
    \State $R_t \gets \frac{1}{N}\sum_{i=1}^{N} H_{\mathrm{tok}}(\vz_t^i)$ \Comment{generation uncertainty (\Eqref{eq:R_round})}
    \State $\tilde{\mu} \gets \lambda\,\mu$,\quad $\tilde{v} \gets \lambda^2 v + q$ \Comment{predict (\Eqref{eq:trans-mv})}
    \State $\alpha \gets \tilde{v} / (\tilde{v} + R_t)$ \Comment{Kalman gain (\Eqref{eq:alpha})}
    \State $\mu \gets \tilde{\mu} + \alpha\,(M_t - \tilde{\mu})$,\quad $v \gets (1-\alpha)\,\tilde{v}$ \Comment{update (\Eqref{eq:update-mv})}
    \State $\mu_t \gets \mu$
\EndFor
\State \Return $u(\vx, \tau) = -\dfrac{1}{T}\sum_{t=1}^T \mu_t$ \Comment{(\Eqref{eq:final-u})}
\end{algorithmic}
\end{algorithm}

\section{Implementation Details}
\label{app:impl}

This appendix collects the metric definitions, benchmark formats, and hyperparameter values used in the main results.

\subsection{Metrics}
\label{app:metrics}

Each estimator outputs an \emph{uncertainty score} $s_i = u(\vx_i, \tau_i)$ per record (higher = less reliable), evaluated against a binary correctness label $y_i \in \{0, 1\}$ on a dataset of size $N$. Let $\mathcal{P} = \{i : y_i = 1\}$ and $\mathcal{N} = \{i : y_i = 0\}$.

\textbf{AUROC.}\quad The probability that a uniformly drawn correct trajectory has a lower uncertainty score than a uniformly drawn incorrect one, with ties contributing one half:
\begin{align}
\mathrm{AUROC} \;=\; \frac{1}{|\mathcal{P}||\mathcal{N}|} \sum_{i \in \mathcal{P}}\sum_{j \in \mathcal{N}} \Bigl(\mathbb{1}[s_i < s_j] + \tfrac{1}{2}\,\mathbb{1}[s_i = s_j]\Bigr),
\end{align}
equivalent to the Mann--Whitney $U$ statistic, with tied pairs contributing $\tfrac{1}{2}$.

\textbf{AUARC.}\quad The area under the accuracy--coverage curve obtained by rejecting the most uncertain trajectories first. Let $\pi$ be the permutation that orders records by ascending uncertainty, $s_{\pi(1)} \le s_{\pi(2)} \le \ldots \le s_{\pi(N)}$. Then
\begin{align}
\mathrm{AUARC} \;=\; \frac{1}{N}\sum_{k=1}^{N} \frac{1}{k}\sum_{i=1}^{k} y_{\pi(i)},
\end{align}
i.e., the average accuracy on the top-$k$ retained subset over all coverage levels $k/N$.

\textbf{Brier after Platt scaling.}\quad The Brier score evaluated on calibrated probabilities,
\begin{align}
\mathrm{Brier} \;=\; \frac{1}{N}\sum_{i=1}^{N} \bigl(\hat p_i - y_i\bigr)^2,
\qquad
\hat p_i \;=\; \sigma\!\bigl(a \cdot (-s_i) + b\bigr),
\end{align}
where Platt parameters $(a, b)$ are fit by minimising the binary cross-entropy on training data. We use 5-fold cross-validation: $(a, b)$ is refit on each training fold, the Brier score is computed on the corresponding held-out fold, and we report the average across folds.

All three metrics are reported as percentages ($\times 100$).

\subsection{Benchmarks}
\label{app:benchmarks}

We follow each benchmark's standard test split or, where listed in \Tabref{tab:benchmarks-summary}, a fixed random subset shared across all backbones. The same prompt template and decoding configuration are used for every (model, dataset, MAS) cell. The consensus answer of a MAS rollout is judged by an automatic verifier matched to the answer format:

\textbf{Benchmark Choice.}\quad Open-source models in our suite (Qwen3-4B, Llama-3.1-8B-Instruct, Gemma-4-E4B-it) are at the smaller-scale end of LLMs and already exhibit a wide accuracy spread on MATH-500, MMLU-Pro, and BBH, yielding a well-balanced split of correct and incorrect MAS outputs that uncertainty estimation can be meaningfully evaluated against. Closed-source models (Gemini-2.5-Flash-Lite, GPT-5.4-mini) largely saturate the original MATH-500 and BBH, leaving too little correct/incorrect variance for the same purpose. We therefore replace those with the harder variants, e.g., OlympiadBench for mathematics and BBEH-mini for hard instruction-following, to restore an informative spread at this stronger model class. MMLU-Pro spans a wide enough difficulty range to remain useful for both classes; we use the full split for open-source models and the $5{,}000$ subset above for closed-source models to control API cost.

\textbf{Multiple-Choice} (MMLU-Pro, BBH, BBEH-mini): canonicalised exact match on the parsed option after format-aware normalisation (whitespace, parenthesisation, casing).

\textbf{Mathematical Reasoning} (MATH-500, OlympiadBench): symbolic equivalence via the Math-Verify framework, which handles algebraic simplification (e.g.\ fraction/decimal forms, sign).

\textbf{Subset Construction.}\quad For closed-source models we use a $5{,}000$-question random subset of the MMLU-Pro test split, sampled uniformly from its $12{,}032$ items with a fixed seed and shared across all closed-source backbones and MAS protocols. Drawing a subset (rather than the full split) keeps API inference cost manageable while preserving the broad subject coverage of MMLU-Pro. BBEH-mini is used as released by~\cite{kazemi2025big}, a compact $\approx\!460$-question collection spanning $23$ Big-Bench Extra Hard tasks; we use it without further filtering.

\begin{table}[h]
\centering
\caption{Benchmark formats and evaluation sizes used in the main results. ``--'' denotes a benchmark not used for that model class.}
\label{tab:benchmarks-summary}
\setlength{\tabcolsep}{8pt}
\begin{tabular}{l l r r}
\toprule
Dataset       & Format             & Open-source size & Closed-source size \\
\midrule
MATH-500      & Free-form math     & $500$       & --                  \\
MMLU-Pro      & Multiple-choice    & $12{,}032$  & $5{,}000$ (subset)  \\
BBH           & MCQ / short string & $1{,}250$   & --                  \\
OlympiadBench & Free-form math     & --          & $674$               \\
BBEH-mini     & MCQ / short string & --          & $\approx\!460$      \\
\bottomrule
\end{tabular}
\end{table}

\subsection{Hyperparameter Configuration}
\label{app:hp-values}

\ours uses a single tuple $(\lambda, q, \sigma_0^2, \mu_0)$ per MAS protocol, transferred across every (model, dataset) cell of Tables~\ref{tab:main-open-spi-ue}--\ref{tab:main-closed-spi-ue}. We tune on a single calibration cell (Qwen3-4B / MMLU-Pro) by joint grid search; per-axis sensitivity around these defaults is reported in \appref{app:hyperparams}.

\begin{table}[h]
\centering
\caption{\ours hyperparameter defaults transferred across all cells.}
\label{tab:hp-config}
\setlength{\tabcolsep}{12pt}
\begin{tabular}{l cccc}
\toprule
MAS protocol & $\lambda$ & $q$ & $\sigma_0^2$ & $\mu_0$ \\
\midrule
Debate & $0.95$  & $0.10$  & $0.25$ & $0.0$ \\
DyLAN  & $0.70$  & $0.001$ & $0.25$ & $0.0$ \\
\bottomrule
\end{tabular}
\end{table}

\section{Additional Experiments}
\label{app:empirical}

This appendix collects additional empirical studies supporting the main results: a comparison between constant and context-dependent transition coefficients (\appref{app:constant-lambda}), hyperparameter sensitivity (\appref{app:hyperparams}), the full calibration grid (\appref{app:calib}), comparisons with simpler trajectory summaries and temporal-order ablations (\appref{app:trajectory-summaries}), statistical analyses (\appref{app:statistical-significance}), and a diagnostic study of trajectories with identical final-round agreement (\appref{app:path-dynamics}).

\subsection{Constant vs.\ Context-Dependent Transition Coefficient}
\label{app:constant-lambda}

The main text instantiates the latent belief as
\begin{align}
s_t = \lambda\, s_{t-1} + \epsilon_t, \qquad \epsilon_t \sim \mathcal{N}(0, q),
\end{align}
with a constant transition coefficient $\lambda$. A natural generalisation is to let $\lambda$ depend on the interaction history $\vh_t$, since how strongly an agent should carry its prior belief into the next round may plausibly vary with the conversation so far. This appendix shows the extra capacity is not exercised: under a tight variational objective, a learnable $\lambda_\theta(\vh_t) = \sigma\!\big(\mathrm{MLP}(\vh_{t-1})\big)$ collapses to a near-degenerate constant whose downstream AUROC is close to that of a fixed scalar, so we adopt the constant $\lambda^{*}$ as the paper's main configuration.

\paragraph{Training Objective: An ELBO.}
We learn $\theta$ by maximising the evidence lower bound (ELBO) on the marginal likelihood of the observation sequence, an objective that does not use correctness labels. For any variational posterior $q(s_{1:T})$,
\begin{align}
\log p_\theta(o_{1:T}) &= \mathcal{L}(\theta, q) + \mathrm{KL}\!\big(q(s_{1:T}) \,\|\, p_\theta(s_{1:T} \mid o_{1:T})\big), \\
\mathcal{L}(\theta, q) &= \mathbb{E}_{q}\!\left[\log p_\theta(o_{1:T}, s_{1:T}) - \log q(s_{1:T})\right].
\end{align}
Under the linear-Gaussian model the Kalman filter induced by $\lambda_\theta$ yields the \emph{exact} posterior, so the bound is tight ($\mathrm{KL}=0$) and reduces to the closed-form innovations decomposition
\begin{align}\label{eq:innovations-ll}
\mathcal{L}(\theta) = -\frac{T}{2}\log(2\pi) - \frac{1}{2}\sum_{t=1}^T \left[\log S_t + \frac{e_t^2}{S_t}\right],
\quad
e_t = o_t - \lambda_\theta(\vh_t)\,\mu_{t-1},
\quad
S_t = \tilde{\sigma}_t^2 + R_t.
\end{align}
The full derivation, including the equivalence between the ELBO and the innovations form, is given in \appref{app:elbo}. Maximising \Eqref{eq:innovations-ll} is therefore variational learning of the transition coefficient, with no looser surrogate to blame if the procedure fails to extract context-dependent structure.

\paragraph{Downstream Comparison.}
\begin{table}[h]
\centering
\caption{Learned vs.\ constant transition coefficient on Qwen3-4B / Debate (AUROC$\uparrow$, $\times 100$). The two filter variants achieve similar AUROC on both datasets; both improve substantially over the no-filter baseline.}
\label{tab:uncertainty-coeff}
\setlength{\tabcolsep}{8pt}
\begin{tabular}{lcc}
\toprule
Method & MATH-500 & MMLU-Pro \\
\midrule
mean$(R_t)$ -- no filter             & 77.00 & 64.93 \\
Learned $\lambda_\theta(\vh_t)$      & 82.85 & \textbf{75.18} \\
Const.\ $\lambda^{*}$ (ours)         & \textbf{82.94} & 74.76 \\
\bottomrule
\end{tabular}
\end{table}
We optimise \Eqref{eq:innovations-ll} end-to-end with respect to $\theta$ on Qwen3-4B Debate trajectories from MATH-500 and MMLU-Pro and compare against the constant $\lambda^{*}$ on the same trajectories. \Tabref{tab:uncertainty-coeff} reports downstream AUROC: the learned context-dependent $\lambda_\theta(\vh_t)$ and constant $\lambda^{*}$ differ by only $0.09$ and $0.42$ AUROC points on MATH-500 and MMLU-Pro, respectively, and both substantially outperform the no-filter baseline. Because the bound is tight, the absence of an improvement cannot be attributed to a loose variational surrogate, and the optimiser nonetheless fails to extract any context-dependent signal that downstream uncertainty estimation can use. Inspection of the trained network confirms this picture: the learned $\lambda_\theta(\vh_t)$ concentrates on a near-degenerate spike with no measurable conditioning on correctness. We therefore adopt the constant transition coefficient as the paper's main configuration: it achieves similar AUROC to the learned variant while remaining fully training-free, and it side-steps the additional engineering overhead (hidden-state access, optimiser hyperparameters, validation splits) that the learnable alternative drags along.

\paragraph{Implementation Details.}
\label{app:lambda-impl}
The MLP has a 64-dimensional bottleneck and a 32-dimensional ReLU hidden layer; $q$, $\mu_0$ and $\sigma_0^2$ are fixed buffers shared with the constant baseline. We optimise the innovations NLL (the negative of \Eqref{eq:innovations-ll}) with Adam at learning rate $10^{-3}$ and early-stop on validation AUROC, which uses correctness labels. Trajectories are split $70/15/15$ into train, validation and test; sample sizes are $n\!=\!500$ for MATH-500 and $n\!=\!12{,}032$ for MMLU-Pro.

\subsection{Hyperparameter Sensitivity}
\label{app:hyperparams}

\begin{figure}[h]
    \centering
    \includegraphics[width=\linewidth]{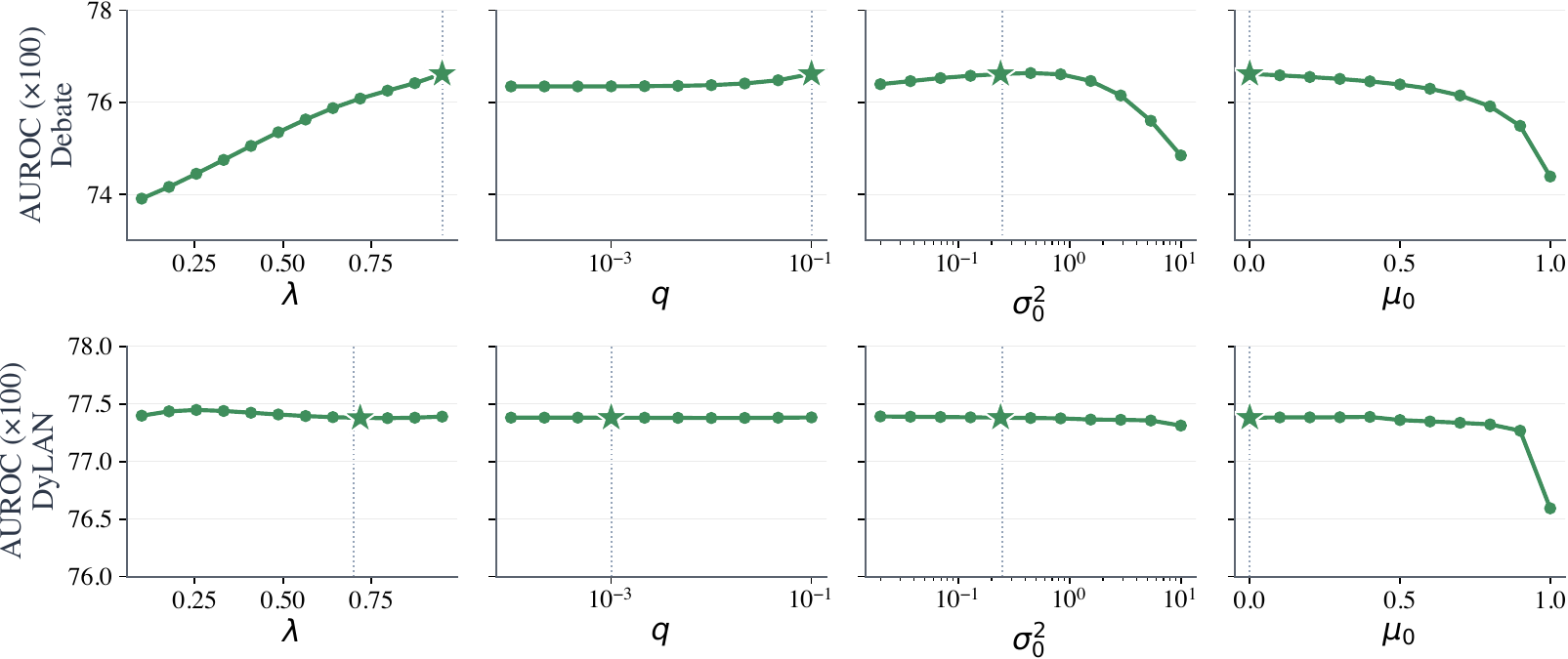}
    \caption{Per-axis sensitivity of \ours's four hyperparameters at the calibration cell (Qwen3-4B / MMLU-Pro). Top row: Debate, selected values $(\lambda, q, \sigma_0^2, \mu_0) = (0.95, 0.10, 0.25, 0.0)$. Bottom row: DyLAN, selected values $(0.70, 0.001, 0.25, 0.0)$. In each panel, one hyperparameter is swept while the other three are held at their per-protocol selected value (marked by \textcolor{spigreen}{\ding{72}} and a dotted vertical line). }
    \label{fig:ablation-hyperparams}
\end{figure}

\textbf{Setup.}\quad
\ours's four scalar hyperparameters $(\lambda, q, \sigma_0^2, \mu_0)$ are tuned by joint grid search on the Qwen3-4B / MMLU-Pro calibration cell, separately for each MAS protocol. The resulting tuples, which are $(0.95, 0.10, 0.25, 0.0)$ for Debate and $(0.70, 0.001, 0.25, 0.0)$ for DyLAN, are then transferred verbatim to every other (model, dataset) cell in Tables~\ref{tab:main-open-spi-ue}--\ref{tab:main-closed-spi-ue}.

\Figref{fig:ablation-hyperparams} sweeps each axis independently around the selected tuple. In Debate, AUROC drops by roughly three points as $\lambda$ decreases from $0.95$ to $0.1$. Decreasing $\lambda$ erodes the temporal coupling that aggregates evidence across rounds, consistent with the role of sequential dynamics highlighted in \Tabref{tab:ablation-signal}. The curves also show sensitivity to the initial mean and variance over the full swept range; these axes are not uniformly flat.

We transfer the selected tuple to every other (model, dataset) cell instead of tuning each cell separately. The sensitivity study characterises the calibration cell; it does not bound the potential benefit of per-cell tuning on other models or datasets.

\subsection{Calibration on All Benchmarks}
\label{app:calib}

\Tabref{tab:brier-open} and \Tabref{tab:brier-closed} report the full Brier-Platt grid across all backbones, MAS protocols, and benchmarks. \ours attains the best per-backbone average on every model and is the leading or near-leading method on the large majority of cells, in line with the headline result of~\Figref{fig:calibration}.

\begin{table*}[t]
    \caption{Calibration quality on open-source models, measured by Brier score after Platt scaling with 5-fold cross-validation ($\downarrow$, $\times 100$). \textbf{Bold} denotes the lowest score per (model, protocol, dataset) cell, and per-backbone Avg.}
    \label{tab:brier-open}
    \vspace{-1em}
    \begin{center}
    \resizebox{0.8\linewidth}{!}{%
    \setlength{\tabcolsep}{6pt}
    \begin{tabular}{l cc cc cc c}
    \toprule
    \multirow{2}{*}{\textbf{Method}} & \multicolumn{2}{c}{\textbf{MATH-500}} & \multicolumn{2}{c}{\textbf{MMLU-Pro}} & \multicolumn{2}{c}{\textbf{BBH}} & \multirow{2}{*}{\textbf{Avg}} \\
    \cmidrule(lr){2-3} \cmidrule(lr){4-5} \cmidrule(lr){6-7}
     & {Debate} & {DyLAN} & {Debate} & {DyLAN} & {Debate} & {DyLAN} & \\
    \midrule
    \multicolumn{8}{c}{\textbf{Qwen3-4B}} \\
    \midrule
    PE                 & 16.92 & 17.06 & 22.44 & 21.64 & 15.23 & 16.48 & 18.30 \\
    LL                 & 17.12 & 16.97 & 22.46 & 21.65 & 15.41 & 16.55 & 18.36 \\
    UDPO               & 17.16 & 15.13 & 21.40 & 19.84 & 15.54 & 15.85 & 17.49 \\
    MATU               & 17.86 & 17.94 & 22.38 & 22.34 & 16.32 & 16.16 & 18.83 \\
    \rowcolor{oursrow}
    \ours (Ours)         & \textbf{15.98} & \textbf{15.06} & \textbf{19.19} & \textbf{18.52} & \textbf{14.77} & \textbf{15.62} & \textbf{16.52} \\
    \midrule
    \multicolumn{8}{c}{\textbf{Llama-3.1-8B-Instruct}} \\
    \midrule
    PE                 & 24.02 & 25.26 & 24.84 & 24.45 & 24.59 & 22.83 & 24.33 \\
    LL                 & 24.71 & 22.51 & 24.74 & 23.65 & 24.56 & 22.49 & 23.78 \\
    UDPO               & 20.70 & 22.38 & 22.03 & 21.67 & 21.94 & 22.51 & 21.87 \\
    MATU               & 25.02 & 20.76 & 24.94 & 23.47 & 23.68 & 22.32 & 23.36 \\
    \rowcolor{oursrow}
    \ours (Ours)         & \textbf{16.57} & \textbf{17.71} & \textbf{19.76} & \textbf{20.18} & \textbf{20.67} & \textbf{20.73} & \textbf{19.27} \\
    \midrule
    \multicolumn{8}{c}{\textbf{Gemma-4-E4B-it}} \\
    \midrule
    PE                 & 13.43 & 12.79 & 18.12 & 16.79 & 11.68 & 17.23 & 15.01 \\
    LL                 & 13.41 & \textbf{12.78} & 18.14 & 16.78 & 11.69 & 17.21 & 15.00 \\
    UDPO               & 13.45 & 14.10 & 17.72 & 16.20 & 10.91 & 13.05 & 14.24 \\
    MATU               & 13.52 & 13.79 & 19.79 & 18.77 & 11.71 & 14.95 & 15.42 \\
    \rowcolor{oursrow}
    \ours (Ours)         & \textbf{13.30} & 13.79 & \textbf{15.53} & \textbf{15.29} & \textbf{10.06} & \textbf{12.40} & \textbf{13.40} \\
    \bottomrule
    \end{tabular}
    }
    \vspace{-0.5em}
    \end{center}
\end{table*}

\begin{table*}[t]
    \caption{Calibration quality on closed-source models, measured by Brier score after Platt scaling with 5-fold cross-validation ($\downarrow$, $\times 100$). Notation follows~\Tabref{tab:brier-open}.}
    \label{tab:brier-closed}
    \vspace{-1em}
    \begin{center}
    \resizebox{0.8\linewidth}{!}{%
    \setlength{\tabcolsep}{6pt}
    \begin{tabular}{l cc cc cc c}
    \toprule
    \multirow{2}{*}{\textbf{Method}} & \multicolumn{2}{c}{\textbf{OlympiadBench}} & \multicolumn{2}{c}{\textbf{MMLU-Pro}} & \multicolumn{2}{c}{\textbf{BBEH-mini}} & \multirow{2}{*}{\textbf{Avg}} \\
    \cmidrule(lr){2-3} \cmidrule(lr){4-5} \cmidrule(lr){6-7}
     & {Debate} & {DyLAN} & {Debate} & {DyLAN} & {Debate} & {DyLAN} & \\
    \midrule
    \multicolumn{8}{c}{\textbf{Gemini-2.5-Flash-Lite}} \\
    \midrule
    PE                 & 22.35 & 21.87 & 18.70 & 14.84 & 17.21 & 15.05 & 18.34 \\
    LL                 & 22.47 & 22.18 & \textbf{14.83} & \textbf{13.53} & 17.18 & 15.56 & 17.63 \\
    UDPO               & 21.60 & \textbf{16.63} & 18.47 & 15.93 & 16.80 & \textbf{14.21} & 17.27 \\
    MATU               & 22.02 & 23.45 & 18.62 & 15.93 & 17.06 & 14.56 & 18.61 \\
    \rowcolor{oursrow}
    \ours (Ours)         & \textbf{17.10} & 16.71 & 15.28 & 15.36 & \textbf{16.30} & 14.55 & \textbf{15.88} \\
    \midrule
    \multicolumn{8}{c}{\textbf{GPT-5.4-mini}} \\
    \midrule
    PE                 & 24.87 & 20.50 & 16.41 & 16.12 & 15.67 & \textbf{13.80} & 17.90 \\
    LL                 & 24.86 & 20.67 & 16.48 & 16.03 & 15.59 & 13.83 & 17.91 \\
    UDPO               & 20.92 & 17.92 & 15.69 & 14.57 & \textbf{15.30} & 13.97 & 16.40 \\
    MATU               & 25.06 & 22.96 & 16.48 & 16.47 & 15.63 & 13.83 & 18.41 \\
    \rowcolor{oursrow}
    \ours (Ours)         & \textbf{16.38} & \textbf{17.04} & \textbf{14.58} & \textbf{14.18} & 15.54 & 13.98 & \textbf{15.28} \\
    \bottomrule
    \end{tabular}
    }
    \vspace{-0.5em}
    \end{center}
\end{table*}

\subsection{Comparison with Simpler Trajectory Summary Methods}
\label{app:trajectory-summaries}

In \Secref{sec:ablation-signal}, we compared the full update with static agreement and reliability-weighted aggregation. Here, we extend the analysis to simpler trajectory summaries on Qwen3-4B and GPT-5.4-mini across six Debate settings.

The experiments in this section follow the same protocol-specific hyperparameter settings as the main experiments, as specified in \appref{app:hp-values}.

\textbf{Evaluation Records.}\quad
For the single-rollout comparisons, MMLU-Pro uses $12{,}032$ Qwen examples and $4{,}999$ evaluable GPT examples from the $5{,}000$-question subset; the remaining evaluation sizes are listed in \Tabref{tab:rebuttal-trajectory-summaries}.

\textbf{Alternative Summaries.}\quad
All scores are oriented so that higher values indicate greater uncertainty:
\begin{itemize}[nosep]
    \item \textbf{EWMA}: a fixed-weight moving average of agreement, initialized with $h_1=M_1$ and updated as $h_t=(1-\beta)h_{t-1}+\beta M_t$ with $\beta=0.3$. Its score is $-T^{-1}\sum_t h_t$, matching \ours's trajectory-averaged readout.
    \item \textbf{Slope}: the negative least-squares slope of $M_t$ over rounds.
    \item \textbf{Switches}: the number of adjacent rounds in which the plurality answer changes.
    \item \textbf{TTC} (time to consensus): the first round with unanimous agreement, with trajectories that never reach unanimity assigned $T+1$.
    \item \textbf{Mean disagreement}: $T^{-1}\sum_t(1-M_t)$.
    \item \textbf{Mean answer entropy}: $T^{-1}\sum_t H_t$, where $H_t$ is the entropy of the empirical distribution of parsed answers at round $t$. This measures cross-agent answer diversity and is distinct from token-level generation entropy $R_t$.
\end{itemize}

\begin{table}[t]
    \caption{Simpler trajectory summaries under Debate (AUROC~$\uparrow$, $\times100$). \textbf{Boldface} and \underline{underlining} denote the best and second-best performance. Evaluation sizes and score definitions follow \appref{app:trajectory-summaries}.}
    \label{tab:rebuttal-trajectory-summaries}
    \centering
    \small
    \setlength{\tabcolsep}{6pt}
    \renewcommand{\arraystretch}{1.12}
    \begin{tabular}{@{}lcccccc@{}}
    \toprule
\multirow{2}{*}{\textbf{Method}} & \multicolumn{3}{c}{\textbf{Qwen3-4B}} & \multicolumn{3}{c}{\textbf{GPT-5.4-mini}} \\
\cmidrule(lr){2-4} \cmidrule(lr){5-7}
 & MATH-500 & MMLU-Pro & BBH & OlympiadBench & MMLU-Pro & BBEH-mini \\
\midrule
EWMA & \underline{66.80} & 68.43 & 69.56 & 81.63 & 66.95 & 53.46 \\
Slope & 37.79 & 38.76 & 40.75 & 32.36 & 39.56 & \textbf{57.25} \\
Switches & 61.76 & 60.59 & 61.47 & 71.08 & 57.58 & 52.43 \\
TTC & 66.47 & 68.29 & 68.29 & 79.97 & 66.31 & \underline{56.94} \\
Mean $1{-}M_t$ & 66.60 & \underline{68.56} & \underline{69.72} & 82.33 & \underline{67.08} & 55.95 \\
Mean $H_t$ & 66.78 & 68.35 & 69.33 & \underline{82.37} & 66.80 & 56.63 \\
\rowcolor{oursrow}
\ours (Ours) & \textbf{68.59} & \textbf{76.42} & \textbf{72.86} & \textbf{83.83} & \textbf{69.53} & 49.96 \\
\midrule
$n$ & 500 & 12,032 & 1,250 & 674 & 4,999 & 460 \\
    \bottomrule
    \end{tabular}
\end{table}

\textbf{Results.}\quad
\ours achieves the highest AUROC among the listed summaries in five of the six settings. It also outperforms EWMA in those five settings, indicating that its gains generally extend beyond fixed-weight temporal smoothing. The exception is GPT-5.4-mini on BBEH-mini, where \ours is near chance and the slope summary performs best. The paired-bootstrap comparison with EWMA is reported in \appref{app:statistical-significance}.

\textbf{Temporal-Order Ablation.}\quad
We keep each trajectory and its final prediction fixed, but process the paired signals $(M_t,R_t)$ in their original order, in reverse order, or after a random permutation. Shuffled AUROC is averaged over three permutation seeds. The evaluation records and filter parameters are unchanged across conditions.

\begin{table}[t]
    \caption{Temporal-order ablation under Debate (AUROC~$\uparrow$, $\times100$). Shuffled averages three seeds; $\Delta$ is original minus the better alternative. Best and second-best scores are bold and underlined.}
    \label{tab:rebuttal-temporal-order}
    \centering
    \small
    \setlength{\tabcolsep}{6pt}
    \renewcommand{\arraystretch}{1.12}
    \begin{tabular}{@{}lcccccc@{}}
    \toprule
\multirow{2}{*}{\textbf{Method}} & \multicolumn{3}{c}{\textbf{Qwen3-4B}} & \multicolumn{3}{c}{\textbf{GPT-5.4-mini}} \\
\cmidrule(lr){2-4} \cmidrule(lr){5-7}
 & MATH-500 & MMLU-Pro & BBH & OlympiadBench & MMLU-Pro & BBEH-mini \\
\midrule
\rowcolor{oursrow}
Original & \textbf{68.59} & \textbf{76.42} & 72.86 & \textbf{83.83} & \textbf{69.53} & \textbf{49.96} \\
Reversed & 68.30 & 71.68 & \textbf{74.26} & 82.29 & 65.21 & \underline{49.72} \\
Shuffled & \underline{68.41} & \underline{72.45} & \underline{73.84} & \underline{82.99} & \underline{66.22} & 49.12 \\
\midrule
$\Delta$ & $+0.19$ & $+3.97$ & $-1.41$ & $+0.84$ & $+3.31$ & $+0.24$ \\
    \bottomrule
    \end{tabular}
\end{table}

\Tabref{tab:rebuttal-temporal-order} shows that the original order outperforms both alternatives in five of the six settings. The clearest differences occur on MMLU-Pro: original processing exceeds the better alternative by $3.97$ AUROC points for Qwen3-4B and $3.31$ for GPT-5.4-mini. The effect is smaller elsewhere, and reversing the order improves Qwen3-4B on BBH. Thus, chronological order provides useful information in several settings, but its benefit is not uniform across tasks.

\subsection{Statistical Significance}
\label{app:statistical-significance}

The experiments in this section follow the same protocol-specific hyperparameter settings as the main experiments, as specified in \appref{app:hp-values}.

\textbf{Prompt-Level Bootstrap.}\quad
We perform $1{,}000$ paired bootstrap resamples of the evaluation records in each setting of \Tabref{tab:rebuttal-trajectory-summaries}. Each resample selects records with replacement and applies the same indices to all estimators. The $2.5$th and $97.5$th percentiles give the reported $95\%$ AUROC intervals. For each baseline $B$, we also report the empirical tail fraction $p_B$, the proportion of paired resamples in which $\mathrm{AUROC}(\text{\ours})-\mathrm{AUROC}(B)\leq0$. These comparisons are one-sided and unadjusted for multiple comparisons. A reported value below $0.001$ means no nonpositive difference was observed in $1{,}000$ resamples; it does not establish a zero population probability.

\begin{table}[t]
    \caption{Paired-bootstrap comparisons under Debate. The first two rows report \ours's AUROC ($\times100$) and 95\% interval; subsequent rows give its tail fraction against each baseline. \textbf{Bold} indicates $p<0.05$; $<0.001$ denotes zero tail counts in 1,000 resamples.}
    \label{tab:rebuttal-bootstrap}
    \centering
    \small
    \setlength{\tabcolsep}{6pt}
    \renewcommand{\arraystretch}{1.12}
    \begin{tabular}{@{}lcccccc@{}}
    \toprule
\multirow{2}{*}{\textbf{Statistic}} & \multicolumn{3}{c}{\textbf{Qwen3-4B}} & \multicolumn{3}{c}{\textbf{GPT-5.4-mini}} \\
\cmidrule(lr){2-4} \cmidrule(lr){5-7}
 & MATH-500 & MMLU-Pro & BBH & OlympiadBench & MMLU-Pro & BBEH-mini \\
\midrule
\rowcolor{oursrow}
\ours & 68.59 & 76.42 & 72.86 & 83.83 & 69.53 & 49.96 \\
95\% CI & 62.6--74.7 & 75.6--77.3 & 69.4--76.3 & 80.7--87.1 & 67.4--71.5 & 43.8--56.3 \\
\midrule
vs. PE & \textbf{0.011} & $\mathbf{<0.001}$ & \textbf{0.019} & $\mathbf{<0.001}$ & $\mathbf{<0.001}$ & 0.121 \\
vs. LL & \textbf{0.020} & $\mathbf{<0.001}$ & \textbf{0.034} & $\mathbf{<0.001}$ & $\mathbf{<0.001}$ & \textbf{0.002} \\
vs. UDPO & $\mathbf{<0.001}$ & $\mathbf{<0.001}$ & $\mathbf{<0.001}$ & $\mathbf{<0.001}$ & $\mathbf{<0.001}$ & 1.000 \\
vs. MATU & \textbf{0.016} & $\mathbf{<0.001}$ & $\mathbf{<0.001}$ & $\mathbf{<0.001}$ & $\mathbf{<0.001}$ & 0.378 \\
vs. EWMA & 0.156 & $\mathbf{<0.001}$ & $\mathbf{<0.001}$ & \textbf{0.002} & $\mathbf{<0.001}$ & 0.998 \\
    \bottomrule
    \end{tabular}
\end{table}

In these comparisons, LL averages token log-likelihood over agents and rounds, and MATU uses a single rollout ($N=1$).

In these supplementary comparisons, the tail fraction is below $0.05$ against PE, averaged LL, UDPO, and single-run MATU in five of the six settings. Against EWMA, it is below $0.05$ in four settings. Qwen3-4B / MATH-500 has a positive point-estimate difference from EWMA but a tail fraction of $0.156$, while GPT-5.4-mini / BBEH-mini does not show an advantage over EWMA. These intervals quantify prompt-sampling uncertainty for a fixed rollout; they do not measure variability across independently generated rollouts.

\textbf{Repeated-Rollout Variability.}\quad
We separately summarize independently generated Debate rollouts at temperature $0.7$ on four settings, evaluating \ours, averaged LL, and final-round PE on each run. \Tabref{tab:rebuttal-rollout-variability} reports the arithmetic mean and sample standard deviation of AUROC, together with the actual run counts and evaluation sizes. MATH-500, BBH, and BBEH-mini each have five runs; OlympiadBench has six, with $673$--$674$ evaluable records per run. We restrict this summary to settings with comparable evaluation sizes across runs.

\begin{table}[t]
    \caption{Repeated Debate rollouts (AUROC~$\uparrow$, mean $\pm$ sample standard deviation, $\times100$). \textbf{Boldface} and \underline{underlining} denote the best and second-best means.}
    \label{tab:rebuttal-rollout-variability}
    \centering
    \small
    \setlength{\tabcolsep}{6pt}
    \renewcommand{\arraystretch}{1.12}
    \begin{tabular}{@{}lcccc@{}}
    \toprule
\multirow{2}{*}{\textbf{Method}} & \multicolumn{2}{c}{\textbf{Qwen3-4B}} & \multicolumn{2}{c}{\textbf{GPT-5.4-mini}} \\
\cmidrule(lr){2-3} \cmidrule(lr){4-5}
 & MATH-500 & BBH & OlympiadBench & BBEH-mini \\
\midrule
PE & 67.04 $\pm$ 1.64 & 69.83 $\pm$ 0.99 & 53.65 $\pm$ 1.92 & \underline{46.11} $\pm$ 1.31 \\
LL & \underline{67.67} $\pm$ 1.22 & \underline{71.57} $\pm$ 1.59 & \underline{70.17} $\pm$ 0.68 & 43.34 $\pm$ 2.54 \\
\rowcolor{oursrow}
\ours (Ours) & \textbf{72.26} $\pm$ 2.29 & \textbf{72.58} $\pm$ 1.76 & \textbf{82.79} $\pm$ 0.87 & \textbf{49.01} $\pm$ 1.51 \\
\midrule
Runs & 5 & 5 & 6 & 5 \\
$n$ per run & 500 & 1,250 & 673--674 & 460 \\
    \bottomrule
    \end{tabular}
\end{table}

\ours has the highest mean AUROC among the three estimators in all four settings, with sample standard deviations ranging from $0.87$ to $2.29$ AUROC points. On BBEH-mini its mean remains near chance, so the relative advantage over LL and PE should not be interpreted as reliable discrimination on that task. The repeated-run summaries and prompt-level bootstrap address different sources of variability and are reported separately.

\begin{samepage}
\subsection{Trajectory Analysis under Unanimous Final Agreement}
\label{app:path-dynamics}

The experiment in this section follows the same protocol-specific hyperparameter settings as the main experiments, as specified in \appref{app:hp-values}.

The motivating example in \Figref{fig:main} suggests that identical final agreement can conceal different interaction histories. We examine this directly on Qwen3-4B / Debate / MMLU-Pro by restricting attention to the $10{,}696$ of $12{,}032$ trajectories whose agents give the same parsed answer in the final round. We group these trajectories by the total number of individual agents' answer changes between adjacent rounds, rather than the plurality-answer switches used as a baseline in \appref{app:trajectory-summaries}.

\end{samepage}

\begin{table}[t]
    \caption{Answer-switch analysis among final-unanimous trajectories on Qwen3-4B / Debate / MMLU-Pro ($n=10{,}696$). Switches count changes in individual agents' answers.}
    \label{tab:rebuttal-path-dynamics}
    \centering
    \small
    \setlength{\tabcolsep}{6pt}
    \renewcommand{\arraystretch}{1.12}
    \begin{tabular}{@{}rccc@{}}
    \toprule
\textbf{Switches} & \textbf{$n$} & \textbf{Accuracy (\%)} & \textbf{Mean confidence} \\
\midrule
0 & 8,004 & 76.9 & 0.915 \\
1 & 1,595 & 49.5 & 0.758 \\
2 & 486 & 44.4 & 0.676 \\
3+ & 611 & 38.8 & 0.622 \\
    \bottomrule
    \end{tabular}
\end{table}

As the number of agent-level switches increases from zero to three or more, accuracy decreases from $76.9\%$ to $38.8\%$, while mean \ours confidence $\bar\mu=-u$ decreases from $0.915$ to $0.622$. Final-round agreement is constant across these groups, but \ours's score tracks their different empirical reliability. This is a descriptive association between interaction history, confidence, and correctness; the analysis does not establish a causal effect of switching, and $\bar\mu$ is not a calibrated correctness probability.

\section{Theoretical Analysis and Proofs}
\label{app:theory}

This appendix collects the formal derivations and proofs supporting the linear-Gaussian instantiation of \ours in \Secref{sec:theory}. \appref{app:proof-prop-update} proves \Propref{prop:update-gaussian} (the closed-form posterior update that defines the \ours recursion). \appref{app:proof-innovations} derives the innovations form of the marginal likelihood used as our training objective in \appref{app:constant-lambda}. \appref{app:elbo} shows that, under our linear-Gaussian model, this objective is also the tight evidence lower bound, which underwrites the variational interpretation of the empirical analysis.

\subsection{Proof of \texorpdfstring{\Propref{prop:update-gaussian}}{Proposition 4.1}}
\label{app:proof-prop-update}

\begin{proof}
We proceed by induction on $t$. The base case at $t=0$ holds by the prior $s_0 \sim \mathcal{N}(\mu_0, \sigma_0^2)$. Assume the inductive hypothesis $s_{t-1} \mid o_{1:t-1} \sim \mathcal{N}(\mu_{t-1}, \sigma_{t-1}^2)$; we show that the same Gaussian form propagates to round $t$ with the stated mean and variance.

\paragraph{Step 1: Predictive Distribution.}
The transition $s_t = \lambda s_{t-1} + \epsilon_t$ with $\epsilon_t \sim \mathcal{N}(0, q)$ independent of $(s_{t-1}, o_{1:t-1})$ is an affine transformation of a Gaussian plus an independent Gaussian, so $s_t \mid o_{1:t-1}$ is Gaussian with
\begin{align*}
\mathbb{E}[s_t \mid o_{1:t-1}]
&\;=\; \mathbb{E}[\lambda s_{t-1} + \epsilon_t \mid o_{1:t-1}]
\;=\; \lambda \mu_{t-1}
\;=:\; \tilde{\mu}_t, \\[2pt]
\mathrm{Var}[s_t \mid o_{1:t-1}]
&\;=\; \lambda^2 \,\mathrm{Var}[s_{t-1} \mid o_{1:t-1}] + \mathrm{Var}[\epsilon_t]
\;=\; \lambda^2 \sigma_{t-1}^2 + q
\;=:\; \tilde{\sigma}_t^2.
\end{align*}
Hence $s_t \mid o_{1:t-1} \sim \mathcal{N}(\tilde{\mu}_t, \tilde{\sigma}_t^2)$.

\paragraph{Step 2: Joint Distribution of State and Observation.}
The observation model $o_t \mid s_t \sim \mathcal{N}(s_t, r_t^2)$ is equivalent to $o_t = s_t + \eta_t$ with $\eta_t \sim \mathcal{N}(0, r_t^2)$ independent of $s_t$ given $o_{1:t-1}$. Conditional on $o_{1:t-1}$, both $s_t$ and $o_t$ are linear functions of jointly Gaussian variables, so the conditional joint $(s_t, o_t) \mid o_{1:t-1}$ is itself Gaussian. Its first two moments are
\begin{align*}
\mathbb{E}[o_t \mid o_{1:t-1}]
&\;=\; \mathbb{E}[s_t + \eta_t \mid o_{1:t-1}]
\;=\; \tilde{\mu}_t, \\[2pt]
\mathrm{Var}[o_t \mid o_{1:t-1}]
&\;=\; \mathrm{Var}[s_t \mid o_{1:t-1}] + \mathrm{Var}[\eta_t]
\;=\; \tilde{\sigma}_t^2 + r_t^2, \\[2pt]
\mathrm{Cov}(s_t,\, o_t \mid o_{1:t-1})
&\;=\; \mathrm{Cov}(s_t,\, s_t + \eta_t \mid o_{1:t-1})
\;=\; \mathrm{Var}[s_t \mid o_{1:t-1}]
\;=\; \tilde{\sigma}_t^2,
\end{align*}
where the last line uses the conditional independence $\mathrm{Cov}(s_t, \eta_t \mid o_{1:t-1}) = 0$. Collecting these moments,
\begin{align}\label{eq:joint-pred}
\begin{pmatrix} s_t \\ o_t \end{pmatrix} \,\bigg|\, o_{1:t-1}
\;\sim\; \mathcal{N}\!\left(\,
\begin{pmatrix} \tilde{\mu}_t \\ \tilde{\mu}_t \end{pmatrix},\;
\begin{pmatrix} \tilde{\sigma}_t^2 & \tilde{\sigma}_t^2 \\ \tilde{\sigma}_t^2 & \tilde{\sigma}_t^2 + r_t^2 \end{pmatrix}
\right).
\end{align}

\paragraph{Step 3: Conditioning on the New Observation.}
For any jointly Gaussian pair $(X, Y)$ with means $(\mu_X, \mu_Y)$, variances $(\Sigma_{XX}, \Sigma_{YY})$, and cross-covariance $\Sigma_{XY}$, the standard conditioning identity gives
\begin{align}\label{eq:gauss-cond}
X \mid Y
\;\sim\; \mathcal{N}\!\Bigl(
\mu_X + \Sigma_{XY}\Sigma_{YY}^{-1}(Y - \mu_Y),\;\;
\Sigma_{XX} - \Sigma_{XY}\Sigma_{YY}^{-1}\Sigma_{YX}
\Bigr).
\end{align}
Applying \Eqref{eq:gauss-cond} to the joint in~\Eqref{eq:joint-pred} with $X = s_t$ and $Y = o_t$ yields
\begin{align*}
s_t \mid o_{1:t}
\;\sim\; \mathcal{N}\!\left(\,
\tilde{\mu}_t + \frac{\tilde{\sigma}_t^2}{\tilde{\sigma}_t^2 + r_t^2}\bigl(o_t - \tilde{\mu}_t\bigr),
\;\;
\tilde{\sigma}_t^2 - \frac{(\tilde{\sigma}_t^2)^2}{\tilde{\sigma}_t^2 + r_t^2}\,\right).
\end{align*}
Define the Kalman gain
\begin{align*}
\alpha_t \;:=\; \frac{\tilde{\sigma}_t^2}{\tilde{\sigma}_t^2 + r_t^2} \;\in\; [0, 1].
\end{align*}
Substituting back, the posterior mean and variance simplify to
\begin{align*}
\mu_t \;=\; \tilde{\mu}_t + \alpha_t\bigl(o_t - \tilde{\mu}_t\bigr),
\qquad
\sigma_t^2 \;=\; \tilde{\sigma}_t^2 \cdot \frac{r_t^2}{\tilde{\sigma}_t^2 + r_t^2}
\;=\; (1-\alpha_t)\,\tilde{\sigma}_t^2.
\end{align*}
The variance further admits a harmonic-of-precisions form: starting from $\sigma_t^2 = \tilde{\sigma}_t^2 r_t^2 / (\tilde{\sigma}_t^2 + r_t^2)$,
\begin{align*}
\frac{1}{\sigma_t^2} \;=\; \frac{\tilde{\sigma}_t^2 + r_t^2}{\tilde{\sigma}_t^2\, r_t^2}
\;=\; \frac{1}{\tilde{\sigma}_t^2} + \frac{1}{r_t^2},
\end{align*}
which is exactly the precision-addition statement in the proposition. Since $s_t \mid o_{1:t}$ is Gaussian with the stated mean and variance, the inductive hypothesis carries to round $t$, completing the induction.
\end{proof}

\subsection{Innovations Form of the Marginal Likelihood}
\label{app:proof-innovations}

By the chain rule of probability, the log marginal likelihood factorises as $\log p(o_{1:T}) = \sum_{t=1}^T \log p(o_t \mid o_{1:t-1})$. Under \Propref{prop:update-gaussian} the posterior from the previous round is $p(s_{t-1} \mid o_{1:t-1}) = \mathcal{N}(\mu_{t-1}, \sigma_{t-1}^2)$; marginalising over the linear-Gaussian transition gives the predictive distribution
\begin{align}
p(s_t \mid o_{1:t-1}) = \mathcal{N}\!\big(\hat{\mu}_t,\; \tilde{\sigma}_t^2\big),
\quad
\hat{\mu}_t = \lambda_t(\vh_t)\,\mu_{t-1},
\quad
\tilde{\sigma}_t^2 = \lambda_t(\vh_t)^2\,\sigma_{t-1}^2 + q,
\end{align}
by the convolution of two Gaussians. Marginalising once more over $s_t$ under the linear-Gaussian observation model yields
\begin{align}
p(o_t \mid o_{1:t-1}) = \mathcal{N}\!\big(\hat{\mu}_t,\; S_t\big),
\quad
S_t = \tilde{\sigma}_t^2 + R_t,
\end{align}
and substituting the Gaussian log-density and summing over $t$ produces the innovations form
\begin{align}
\log p(o_{1:T}) = -\frac{T}{2}\log(2\pi) - \frac{1}{2}\sum_{t=1}^T \left[\log S_t + \frac{e_t^2}{S_t}\right],
\quad
e_t = o_t - \hat{\mu}_t.
\end{align}
The two terms balance prediction error against predicted uncertainty: $e_t^2/S_t$ penalises errors relative to the predictive variance, while $\log S_t$ penalises overconfidence.

\subsection{Equivalence with the Evidence Lower Bound}
\label{app:elbo}

For the following decomposition, we assume a Markov-structured variational posterior,
\[
q(s_{1:T}) = q(s_1)\prod_{t=2}^{T}q(s_t \mid s_{t-1}),
\]
where dependence on the observed sequence $o_{1:T}$ is left implicit. For any variational posterior $q(s_{1:T})$ over the latent trajectory, the standard decomposition gives
\begin{align}
\log p(o_{1:T}) &= \mathcal{L}(q) + \mathrm{KL}\!\big(q(s_{1:T}) \,\|\, p(s_{1:T} \mid o_{1:T})\big),
\quad \\
\mathcal{L}(q) &= \mathbb{E}_{q}\!\left[\log p(o_{1:T}, s_{1:T}) - \log q(s_{1:T})\right].
\end{align}
Using the Markov structure $p(o_{1:T}, s_{1:T}) = p(s_1)\prod_{t=2}^T p(s_t \mid s_{t-1})\prod_{t=1}^T p(o_t \mid s_t)$, the ELBO splits into three interpretable terms,
\begin{align}
\mathcal{L}(q) &\;=\; \underbrace{\sum_{t=1}^T \mathbb{E}_{q(s_t)}\!\left[\log p(o_t \mid s_t)\right]}_{\text{(I) observation fit}}
\;-\; \underbrace{\mathrm{KL}\!\big(q(s_1) \,\|\, p(s_1)\big)}_{\text{(II) prior regularisation}} \notag \\
&\quad \;-\; \underbrace{\sum_{t=2}^T \mathbb{E}_{q(s_{t-1})}\!\left[\mathrm{KL}\!\big(q(s_t \mid s_{t-1}) \,\|\, p(s_t \mid s_{t-1})\big)\right]}_{\text{(III) transition regularisation}},
\end{align}
each of which has a closed form under Gaussian $q$ (omitted for brevity).

Tightness then follows from a standard property of linear-Gaussian state-space models: the Kalman smoother yields the exact posterior $p(s_{1:T} \mid o_{1:T})$, so setting $q(s_{1:T}) = p(s_{1:T} \mid o_{1:T})$ makes $\mathrm{KL}(q\|p) = 0$ and the ELBO coincides with the exact log marginal likelihood. Evaluating the three terms above at the Kalman posterior and simplifying recovers the innovations form of \appref{app:proof-innovations}. The training objective in \appref{app:constant-lambda} is therefore simultaneously the exact marginal likelihood and the tight ELBO, a property we use implicitly when arguing that the failure of $\lambda_\theta(\vh_t)$ to outperform a constant cannot be attributed to a loose variational bound.


\newpage

\end{document}